  \renewcommand{\bm}[1]{#1}%
\newcommand{\ignore}[1]{}
\newcommand{\wrapalgo}[2][0.9\linewidth]
{%
\begin{center}\setlength{\fboxsep}{5pt}\fbox{\begin{minipage}{#1}
#2
\end{minipage}}\end{center}
\vspace{-0.25cm}
}
\theoremstyle{definition}
\newtheorem{theorem}{Theorem}				
\newtheorem{lemma}[theorem]{Lemma}
\newtheorem*{theorem*}{Theorem}
\newtheorem*{lemma*}{Lemma}
\newtheorem*{corollary*}{Corollary}
\newtheorem*{proposition*}{Proposition}
\newtheorem*{claim*}{Claim}
\newtheorem*{fact*}{Fact}
\newtheorem*{observation*}{Observation}
\theoremstyle{definition}
\newtheorem*{definition*}{Definition}
\newtheorem*{remark*}{Remark}
\newtheorem*{example*}{Example}
\newtheorem*{question*}{Question}
 \theoremstyle{plain}
\newtheorem*{theoremaux}{\theoremauxref}
\gdef\theoremauxref{1}
\DeclareMathAlphabet{\mathbfsf}{\encodingdefault}{\sfdefault}{bx}{n}
\DeclareMathOperator*{\argmin}{arg\!\min}
\DeclareMathOperator*{\trace}{Tr}
\DeclareMathOperator*{\diag}{diag}
\newcommand{\lr}[1]{\mathopen{}\left(#1\right)}
\newcommand{\Lr}[1]{\mathopen{}\big(#1\big)}
\newcommand{\LR}[1]{\mathopen{}\Big(#1\Big)}
\newcommand{\Lrbra}[1]{\mathopen{}\big[#1\big]}
\newcommand{\LRbra}[1]{\mathopen{}\Big[#1\Big]}
\newcommand{\norm}[1]{\|#1\|}
\newcommand{\lrset}[1]{\mathopen{}\left\{#1\right\}}
\newcommand{\Lrset}[1]{\mathopen{}\big\{#1\big\}}
\newcommand{\wt}[1]{\smash{\widetilde{#1}}}
\newcommand{\wh}[1]{\smash{\widehat{#1}}}
\renewcommand{\O}{O}
\newcommand{\ind}[1]{\mathbb{I}\lrset{#1}}
\newcommand{\tr}{^{\mkern-1.5mu\scriptstyle\mathsf{T}}}
\newcommand{\st}{\star}
\newcommand{\reals}{\mathbb{R}}
\newcommand{\eps}{\epsilon}
\newcommand{\thalf}{\tfrac{1}{2}}
\newcommand{\eqdef}{\stackrel{\text{def}}{=}}
\renewcommand{\leq}{~\le~}
\renewcommand{\geq}{~\ge~}
\let\oldtfrac\tfrac
\renewcommand{\tfrac}[2]{\smash{\oldtfrac{#1}{#2}}}
\let\nablaold\nabla
\renewcommand{\nabla}{\nablaold\mkern-1mu}
\newcommand*\samethanks[1][\value{footnote}]{\footnotemark[#1]}
\renewcommand*{\@fnsymbol}[1]{\ensuremath{
  \ifcase#1 \or \natural \or \star \else\@ctrerr\fi}}
\title{Shampoo: Preconditioned Stochastic Tensor Optimization}
\author{%
Vineet Gupta%
\thanks{Google Brain. Email: \texttt{\{vineet,tkoren\}@google\!.\!com}}\qquad 
Tomer Koren\samethanks\qquad 
Yoram Singer%
\thanks{Princeton University and Google Brain. Email: \texttt{y\!.\!s@cs\!.\!princeton\!.\!edu}}
}
\begin{document}
\maketitle


\newcommand{\bbS}{\mathbb{S}}
\newcommand{\cW}{\mathcal{W}}
\newcommand{\cH}{\mathcal{H}}
\newcommand{\cR}{\mathcal{R}}
\newcommand{\barG}{\overline{G}}

\newcommand{\shalf}{{\smash{\nicefrac{1}{2}}}}
\newcommand{\squar}{{\smash{\nicefrac{1}{4}}}}

\newcommand{\kron}{\otimes}
\newcommand{\bigkron}{\bigotimes}
\newcommand{\out}{\circ}
\newcommand{\frob}{\mathsf{F}}
\newcommand{\proj}{\Pi}

\renewcommand{\vec}{\overline{\mathrm{vec}}}
\newcommand{\rank}{\mathrm{rank}}

\newcommand{\matxxx}{\mathrm{mat}}
\newcommand{\mat}[2]{\matxxx_{#1}(#2)}
\newcommand{\lrmat}[2]{\matxxx_{#1}\lr{#2}}
\newcommand{\Lrmat}[2]{\matxxx_{#1}\Lr{#2}}
\newcommand{\LRmat}[2]{\matxxx_{#1}\LR{#2}}

\newcommand{\figref}[1]{{Fig.~\ref{#1}}}

\pgfplotsset{
  every axis plot/.append style={line width=1pt}
}

\makeatletter
\newcommand{\pushright}[1]{\ifmeasuring@#1\else\omit\hfill$\displaystyle#1$\fi\ignorespaces}
\newcommand{\pushleft}[1]{\ifmeasuring@#1\else\omit$\displaystyle#1$\hfill\fi\ignorespaces}
\makeatother

\def\NAME{Shampoo\xspace}

\newcommand{\arxivv}[1]{#1}
\newcommand{\submitv}[1]{}

\begin{abstract}%
Preconditioned gradient methods are among the most general and powerful tools
in optimization. However, preconditioning requires storing and manipulating
prohibitively large matrices. We describe and analyze a new structure-aware
preconditioning algorithm, called \NAME, for stochastic optimization over
tensor spaces. \NAME maintains a set of preconditioning matrices, each of
which operates on a single dimension, contracting over the remaining
dimensions. We establish convergence guarantees in the stochastic convex
setting, the proof of which builds upon matrix trace inequalities.  Our
experiments with state-of-the-art deep learning models show that \NAME is
capable of converging considerably faster than commonly used optimizers.
Although it involves a more complex update rule, \NAME's runtime per step is
comparable to that of simple gradient methods such as SGD, AdaGrad, and Adam.%
\end{abstract}

\section{Introduction}

Over the last decade, stochastic first-order optimization methods have emerged
as the canonical tools for training large-scale machine learning models.
These methods are particularly appealing due to their wide applicability and
their low runtime and memory costs.

A potentially more powerful family of algorithms consists of
\emph{preconditioned} gradient methods.  Preconditioning methods maintain a matrix,
termed a preconditioner, which is used to transform (i.e., premultiply) the
gradient vector before it is used to take a step.  Classic algorithms in this
family include Newton's method, which employs the local Hessian as a
preconditioner, as well as a plethora of quasi-Newton methods (e.g.,
\cite{fletcher2013practical,lewis2013nonsmooth,nocedal1980updating}) that
can be used whenever second-order information is unavailable or too expensive
to compute.  Newer additions to this family are preconditioned online
algorithms, most notably AdaGrad~\citep{duchi2011adaptive}, that use the
covariance matrix of the accumulated gradients to form a preconditioner.

While preconditioned methods often lead to improved convergence properties,
the dimensionality of typical problems in machine learning prohibits
out-of-the-box use of full-matrix preconditioning.  To mitigate this issue,
specialized variants have been devised in which the full preconditioner is replaced
with a diagonal approximation \citep{duchi2011adaptive, kingma2014adam}, a
sketched version \citep{gonen2015faster, pilanci2017newton}, or various
estimations thereof \citep{erdogdu2015convergence, agarwal2016second,
xu2016sub}. While the diagonal methods are heavily used in practice thanks to
their favorable scaling with the dimension, the other approaches are seldom
practical at large scale as one typically requires a fine approximation (or estimate) of the preconditioner that often demands super-linear memory and computation.

In this paper, we take an alternative approach to preconditioning and describe
an efficient and practical apparatus that exploits the structure of the
parameter space.  Our approach is motivated by the observation that in
numerous machine learning applications, the parameter space entertains a more
complex structure than a monolithic vector in Euclidean space. In multiclass
problems the parameters form a matrix of size $m\times n$ where $m$ is the
number of features and $n$ is the number of classes. In neural networks, the
parameters of each fully-connected layer form an $m \times n$ matrix with $n$
being the number of input nodes and $m$ is the number of outputs. The space of
parameters of convolutional neural networks for images is a collection of $4$
dimensional tensors of the form input-depth $\times$ width $\times$ height
$\times$ output-depth. As a matter of fact, machine learning software tools
such as Torch and TensorFlow are designed with tensor structure in mind.



\input{tensors}
\begin{figure}
  \begin{center}
    \begin{tikzpicture}[scale=0.4]
      \input{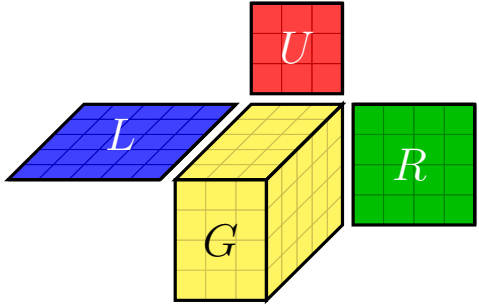}
      \fill (-1.0,-0.5) circle (0.001) node {\LARGE $G$};
      \fill (-4.3, 3.0) circle (0.001) node {\LARGE\color{white} $L$};
      \fill ( 1.5, 5.9) circle (0.001) node {\LARGE\color{white} $U$};
      \fill ( 5.2, 2.0) circle (0.001) node {\LARGE\color{white} $R$};
    \end{tikzpicture}
  \end{center}
  \caption{\label{prect:fig}Illustration of \NAME for a $3$-dimensional tensor
  $G\in\reals^{3 \times 4 \times 5}$.}
\end{figure}

Our algorithm, which we call \NAME,%
\footnote{We call it \NAME because it has to do with preconditioning.}
retains the tensor
structure of the gradient and maintains a separate preconditioner matrix for
each of its dimensions. An illustration of \NAME is provided
in~\Cref{prect:fig}. The set of preconditioners is updated by the algorithm
in an online fashion with the second-order statistics of the accumulated
gradients, similarly to AdaGrad.  Importantly, however, each individual
preconditioner is a full, yet moderately-sized, matrix that can be effectively
manipulated in large scale learning problems. 

While our algorithm is motivated by modern machine learning practices, in
particular training of deep neural networks, its derivation stems from our
analysis in a stochastic convex optimization setting.  In fact, we analyze
\NAME in the broader framework of online convex
optimization~\cite{shalev2012online, hazan2016introduction}, thus its
convergence applies more generally. Our analysis combines well-studied tools
in online optimization along with off-the-beaten-path inequalities concerning
geometric means of matrices. Moreover, the adaptation to the high-order tensor
case is non-trivial and relies on extensions of matrix analysis to the tensor
world.

We implemented \NAME (in its general tensor form) in Python as a new optimizer
in the TensorFlow framework~\citep{tensorflow}.  \NAME is extremely simple to
implement, as most of the computations it performs boil down to standard
tensor operations supported out-of-the-box in TensorFlow and similar
libraries. Using the \NAME optimizer is also a straightforward process.
Whereas recent optimization methods, such as \cite{martens2015optimizing,
neyshabur2015path}, need to be aware of the structure of the underlying model,
\NAME only needs to be informed of the tensors involved and their sizes.  In
our experiments with state-of-the-art deep learning models \NAME is capable of
converging considerably faster than commonly used optimizers.  Surprisingly,
albeit using more complex update rule, \NAME's runtime per step is comparable
to that of simple methods such as vanilla SGD.

\begin{algorithm}[t!]
\wrapalgo[0.65\textwidth]{
\begin{algorithmic}\itemindent=-10pt
\STATE Initialize $W_1 = \bm{0}_{m \times n} ~;~ L_0 = \epsilon I_m ~;~ R_0 = \epsilon I_n$
\FOR{$t = 1,\ldots,T$}
  \STATE Receive loss function $f_t:\reals^{m\times n}\mapsto\reals$
  \STATE Compute gradient $G_t = \nabla f_t(W_t)$
  \COMMENT{$G_t \in \reals^{m\times n}$}
  \STATE Update preconditioners:
  \vspace*{-2ex}
  \begin{align*}
    L_t &= L_{t-1} + G_t G_t\tr \\
    R_t &= R_{t-1} + G_t\tr G_t
  \end{align*}
  \vspace*{-4ex}
  \STATE Update parameters:
  \vspace*{-1ex}
  $$
    W_{t+1} ~=~ W_t - \eta L_t^{-\nicefrac{1}{4}} G_t R_t^{-\nicefrac{1}{4}}
  $$
  \vspace*{-4ex}
\ENDFOR
\end{algorithmic}
}
\caption{\NAME, matrix case.}
\label{alg:sham2d}
\end{algorithm}

\subsection{\NAME for matrices}

In order to further motivate our approach we start with a special case of
\NAME and defer a formal exposition of the general algorithm to later
sections.  In the two dimensional case, the parameters form a matrix
$W\in\reals^{m\times n}$.
First-order methods update iterates $W_t$ based on the gradient $G_t=\nabla
f_t(W_t)$, which is also an $m \times n$ matrix. Here, $f_t$ is the
loss function encountered on iteration $t$ that typically represents the loss
incurred over a single data point (or more generally, over a batch of data).

A structure-oblivious full-matrix preconditioning scheme
would flatten the parameter space into an $mn$-dimensional vector
and employ preconditioning matrices $H_t$ of size $mn\times mn$. 
In contrast, \NAME 
maintains smaller {left} $L_t\in\reals^{m\times m}$ and right
$R_t\in\reals^{n\times n}$ matrices containing second-moment information of the accumulated
gradients. 
On each iteration, two preconditioning matrices are formed from
$L_t$ and $R_t$ and multiply the gradient matrix from the left and right
respectively. 
The amount of space \NAME uses in the matrix case is
$m^2 + n^2$ instead of $m^2 n^2$. Moreover, as the preconditioning involves
matrix inversion (and often spectral decomposition), the amount of computation required
to construct the left and right preconditioners is $O(m^3 + n^3)$,
substantially lower than full-matrix methods which require $O(m^3 n^3)$.

The pseudocode of \NAME for the matrix case is given in~\cref{alg:sham2d}.
To recap more formally, \NAME maintains two different matrices: an $m
\times m$ matrix $L_t^{\squar}$ to precondition the rows of $G_t$ and
$R_t^\squar$ for its columns. The $\squar$ exponent arises from our
analysis; intuitively, it is a sensible choice as it induces an overall
step-size decay rate of $O(1/\sqrt{t})$, which is common in stochastic
optimization methods. 
The motivation for the algorithm comes from the
observation that its update rule is equivalent, after flattening $W_t$ and
$G_t$, to a gradient step preconditioned using the Kronecker product of
$L_t^{\squar}$ and $R_t^{\squar}$.
The latter is shown to be tightly connected to a full unstructured preconditioner matrix used by algorithms such as AdaGrad.
Thus, the algorithm can be thought of
as maintaining a ``structured'' matrix which is implicitly
used to precondition the flattened gradient, without either forming a full
matrix or explicitly performing a product with the flattened
gradient vector.

\subsection{Related work}

As noted above, \NAME is closely related to AdaGrad \citep{duchi2011adaptive}.
The diagonal (i.e., element-wise) version of AdaGrad is extremely
popular in practice and frequently applied to tasks ranging from learning
linear models over sparse features to training of large deep-learning models.
In contrast, the full-matrix version of AdaGrad analyzed in
\cite{duchi2011adaptive} is rarely used in practice due to the prohibitive
memory and runtime requirements associated with maintaining a full
preconditioner. \NAME can be viewed as an efficient, practical and provable
apparatus for approximately and implicitly using the full AdaGrad
preconditioner, without falling back to diagonal matrices.

Another recent optimization method that uses factored preconditioning is K-FAC
\citep{martens2015optimizing}, which was specifically designed to optimize the
parameters of neural networks. K-FAC employs a preconditioning scheme that
approximates the Fisher-information matrix of a generative model represented by a neural network.  The Fisher matrix of each layer in the network is approximated by a Kronecker product of two smaller matrices, relying on certain independence assumptions regarding the statistics of the gradients.
K-FAC differs from \NAME in several important ways. 
While K-FAC is used for training generative models and needs to sample from the model's predictive distribution, \NAME applies in a general stochastic (more generally, online) optimization setting and comes with convergence guarantees in the convex case.
K-FAC relies heavily on the structure of the backpropagated gradients in a feed-forward neural network. 
In contrast, \NAME is virtually oblivious to the particular model structures and only depends on standard gradient information.
As a result, \NAME is also much easier to implement and use in practice as it need not be tailored to the particular model or architecture.

\section{Background and technical tools} \label{prelim:sec}
We use lowercase letters to denote scalars and vectors and uppercase letters
to denote matrices and tensors. 
Throughout, the notation $A \succeq 0$ (resp.~$A \succ 0$) for a matrix $A$ means that $A$ is \emph{symmetric} and positive semidefinite (resp.~definite), or PSD (resp.~PD) in short.
Similarly, the notations $A \succeq B$ and $A \succ B$ mean that $A-B \succeq 0$ and $A-B \succ 0$ respectively, and both tacitly assume that $A$ and $B$ are symmetric.
Given $A \succeq 0$ and $\alpha \in \reals$, the matrix $A^\alpha$ is defined as the
PSD matrix obtained by applying $x \mapsto x^\alpha$ to the eigenvalues of $A$; formally, if we rewrite $A$ using its spectral decomposition
$\sum_{i} \lambda_i u_i^{} u_i\tr$ 
in which $(\lambda_i,u_i)$ is $A$'s $i$'th eigenpair, then
$A^\alpha = \smash{\sum_{i} \lambda_i^\alpha u_i^{} u_i\tr}$.
We denote by $\norm{x}_A = \sqrt{x\tr A x}$
the Mahalanobis norm of $x\in\reals^d$ as induced by a positive definite
matrix $A \succ 0$. The dual norm of $\norm{\cdot}_A$ is denoted
$\smash{\norm{\cdot}_{A}^*}$ and equals $\sqrt{x\tr A^{-1} x}$. The inner
product of two matrices $A$ and
$B$ is denoted as $A \bullet B = \trace(A\tr B)$.  The spectral norm of a
matrix $A$ is denoted $\norm{A}_2 = \max_{x \ne 0} \norm{Ax}/\norm{x}$ and
the Frobenius norm is $\norm{A}_\frob = \sqrt{A \bullet A}$. We denote by
$e_i$ the unit vector with $1$ in its $i$'th position and $0$ elsewhere.

\subsection{Online convex optimization}
We use Online Convex Optimization~(OCO)~\citep{shalev2012online,
hazan2016introduction} as our analysis framework. OCO can be seen as a
generalization of stochastic (convex) optimization.  In OCO a learner makes
predictions in the form of a vector belonging to a convex domain $\cW
\subseteq \reals^d$ for $T$ rounds. After predicting $w_t \in \cW$ on round
$t$, a convex function $f_t : \cW \mapsto \reals$ is chosen, potentially in an
adversarial or adaptive way based on the learner's past predictions. The
learner then suffers a loss $f_t(w_t)$ and observes the function $f_t$ as
feedback.  The goal of the learner is to achieve low cumulative loss compared
to any fixed vector in the $\cW$.  Formally, the learner attempts to minimize
its \emph{regret}, defined as the quantity
\begin{align*}
  \cR_T = \sum_{t=1}^T f_t(w_t) - \min_{w \in \cW} \sum_{t=1}^T f_t(w) ~ ,
\end{align*}
Online convex optimization includes stochastic convex optimization as a
special case. Any regret minimizing algorithm can be converted to a
stochastic optimization algorithm with convergence rate $O(\cR_T/T)$ using
an online-to-batch conversion technique~\cite{cesa2004generalization}.

\subsection{Adaptive regularization in online optimization}
We next introduce tools from online optimization that our algorithms
rely upon. First, we describe an adaptive version of Online Mirror
Descent (OMD) in the OCO setting which employs time-dependent
regularization. The algorithm proceeds as follows: on each
round $t=1,2,\ldots,T$, it receives the loss function $f_t$ and computes the
gradient $g_t = \nabla f_t(w_t)$. Then, given a positive definite
matrix $H_t \succ 0$ it performs an update according to
\begin{align} \label{eq:update}
w_{t+1} &=
\argmin_{w \,\in\, \cW} \, \Lrset{ \eta g_t \tr w
	+ \thalf\norm{w - w_t}_{H_t}^2 } ~ .
\end{align}
When $\cW = \reals^d$, \cref{eq:update} is equivalent to a preconditioned gradient step, $w_{t+1} = w_t - \eta H_t^{-1} g_t .$
More generally, the update rule can be rewritten as a projected gradient step,
\begin{align*}
w_{t+1} = \Pi_{\cW} \Lrbra{ w_t - \eta H_t^{-1} g_t ; H_t } ,
\end{align*}
where $\Pi_{\cW}[z ; H] = \argmin_{w \in \cW} \norm{w-z}_{H}$ is the
projection onto the convex set $\cW$ with respect to the norm
$\norm{\cdot}_{H}$.  
The following lemma provides a regret bound for Online
Mirror Descent, see for instance~\cite{duchi2011adaptive}.

\begin{lemma} \label{lem:regret-md}
For any sequence of matrices $H_1,\ldots,H_T \succ 0$, the regret of online
mirror descent is bounded above by,
$$
	\frac{1}{2\eta} \sum_{t=1}^T \Lr{ \norm{w_t-w^\st}_{H_t}^2 - \norm{w_{t+1}-w^\st}_{H_t}^2 }
	+
	\frac{\eta}{2} \sum_{t=1}^T \Lr{\norm{g_t}_{H_t}^*}^2 ~ .
$$
\end{lemma}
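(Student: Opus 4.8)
The plan is to run the textbook single-step analysis of Online Mirror Descent and then telescope. First I would use convexity of each $f_t$ to pass to linearized losses: since $f_t(w_t) - f_t(w^\st) \le g_t\tr(w_t - w^\st)$ and (by definition of the regret) $w^\st \in \cW$, it suffices to bound $\sum_{t=1}^T g_t\tr(w_t-w^\st)$ by the stated quantity. So the real goal becomes a per-round bound on $g_t\tr(w_t-w^\st)$.

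For a single round I would exploit that $w_{t+1}$ minimizes the ($H_t$-strongly convex) function $\phi_t(w) = \eta g_t\tr w + \thalf\norm{w-w_t}_{H_t}^2$ over the convex set $\cW$. Its variational optimality condition reads $\nabla\phi_t(w_{t+1})\tr(w - w_{t+1}) \ge 0$ for every $w\in\cW$; plugging in $w = w^\st$ and using $\nabla\phi_t(w) = \eta g_t + H_t(w-w_t)$ yields $\eta\, g_t\tr(w_{t+1}-w^\st) \le (w_{t+1}-w_t)\tr H_t(w^\st-w_{t+1})$. I would then split $g_t\tr(w_t-w^\st) = g_t\tr(w_t-w_{t+1}) + g_t\tr(w_{t+1}-w^\st)$, bound the second summand by the inequality just derived, and expand the bilinear form with the polarization identity $2(w_{t+1}-w_t)\tr H_t(w^\st-w_{t+1}) = \norm{w_t-w^\st}_{H_t}^2 - \norm{w_{t+1}-w^\st}_{H_t}^2 - \norm{w_{t+1}-w_t}_{H_t}^2$. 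This surfaces the telescoping difference of Mahalanobis distances together with a leftover term $-\thalf\norm{w_{t+1}-w_t}_{H_t}^2$.

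The remaining summand $g_t\tr(w_t-w_{t+1})$ I would control by the generalized Cauchy--Schwarz inequality $g_t\tr x \le \norm{g_t}_{H_t}^*\,\norm{x}_{H_t}$ (legitimate since $H_t\succ0$, with $\norm{g_t}_{H_t}^* = \sqrt{g_t\tr H_t^{-1} g_t}$) followed by Young's inequality $ab \le \thalf\eta\, a^2 + \tfrac{1}{2\eta} b^2$, giving $\eta\, g_t\tr(w_t-w_{t+1}) \le \tfrac{\eta^2}{2}\lr{\norm{g_t}_{H_t}^*}^2 + \thalf\norm{w_t-w_{t+1}}_{H_t}^2$. The $\thalf\norm{w_t-w_{t+1}}_{H_t}^2$ here cancels exactly against the leftover negative term, and dividing through by $\eta$ produces the per-round bound $g_t\tr(w_t-w^\st) \le \tfrac{1}{2\eta}\lr{\norm{w_t-w^\st}_{H_t}^2 - \norm{w_{t+1}-w^\st}_{H_t}^2} + \tfrac{\eta}{2}\lr{\norm{g_t}_{H_t}^*}^2$. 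Summing over $t=1,\dots,T$ and prepending the convexity step finishes the proof.

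I expect the only genuinely delicate point to be the handling of the constraint set $\cW$: one must use the variational-inequality form of optimality so that the comparator $w^\st\in\cW$ is an admissible test point (equivalently, invoke the obtuse-angle / non-expansiveness property of the projection $\proj_\cW[\,\cdot\,;H_t]$ with the correct sign), rather than naively setting a gradient to zero as one could if $\cW=\reals^d$. Everything else is routine manipulation with the polarization identity and Young's inequality, and uses nothing about the matrices $H_t$ beyond positive definiteness.
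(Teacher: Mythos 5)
Your proof is correct and is precisely the standard mirror-descent argument (first-order optimality at $w_{t+1}$ tested against $w^\st$, polarization, then Cauchy--Schwarz with Young's inequality) that the paper relies on by citing \citet{duchi2011adaptive} rather than proving \cref{lem:regret-md} itself. Nothing to add: the handling of the constraint set via the variational inequality is exactly the right delicate point, and the rest telescopes as you describe.
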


In order to analyze particular regularization schemes, namely specific
strategies for choosing the matrices $H_1,\ldots,H_T$, we need the following
lemma, adopted from~\cite{gupta2017unified}; for completeness, we provide a short proof in \cref{sec:moreproofs}. 

\begin{lemma}[\citet{gupta2017unified}] \label{lem:adareg}
Let $g_1,\ldots,g_T$ be a sequence of vectors, and let
$M_t = \sum_{s=1}^t g_s g_s\tr$ for $t \geq 1$.
Given a function $\Phi$ over PSD matrices, define
\begin{align*}
H_t = \argmin_{H \succ 0} \, \Lrset{ M_t \bullet H^{-1} + \Phi(H) }
\end{align*}
(and assume that a minimum is attained for all $t$).
Then
\begin{align*}
	\sum_{t=1}^T \Lr{\norm{g_t}_{H_t}^*}^2 \le
		\sum_{t=1}^T \Lr{\norm{g_t}_{H_T}^*}^2 + \Phi(H_T) - \Phi(H_0) ~ .
\end{align*}
\end{lemma}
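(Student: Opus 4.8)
The plan is to prove \cref{lem:adareg} via a telescoping/induction argument that exploits the variational definition of $H_t$. The key identity to unpack is that, since $H_t$ minimizes $H \mapsto M_t \bullet H^{-1} + \Phi(H)$, we have a clean handle on the optimal value. Write $\Psi_t(H) = M_t \bullet H^{-1} + \Phi(H)$ and observe $\norm{g_t}_{H_t}^{*2} = g_t\tr H_t^{-1} g_t = (g_t g_t\tr) \bullet H_t^{-1} = (M_t - M_{t-1}) \bullet H_t^{-1}$. So the sum I want to bound is $\sum_{t=1}^T (M_t - M_{t-1}) \bullet H_t^{-1}$, with the convention $M_0 = 0$.

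First I would establish the single-step inequality
\begin{align*}
(M_t - M_{t-1}) \bullet H_t^{-1} ~\le~ \Psi_t(H_t) - \Psi_{t-1}(H_{t-1}) + \Phi(H_{t-1}) - \Phi(H_t) .
\end{align*}
This follows because $M_t \bullet H_t^{-1} + \Phi(H_t) = \Psi_t(H_t) \le \Psi_t(H_{t-1}) = M_t \bullet H_{t-1}^{-1} + \Phi(H_{t-1})$ by optimality of $H_t$, and rearranging gives $(M_t - M_{t-1})\bullet H_t^{-1} \le M_{t-1}\bullet H_{t-1}^{-1} - M_{t-1}\bullet H_t^{-1} + \Phi(H_{t-1}) - \Phi(H_t)$; then I rewrite $M_{t-1}\bullet H_{t-1}^{-1} = \Psi_{t-1}(H_{t-1}) - \Phi(H_{t-1})$ and bound $-M_{t-1}\bullet H_t^{-1} \le 0$... wait, that is the wrong sign. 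Let me instead telescope directly: summing $\Psi_t(H_t) \le \Psi_t(H_{t-1})$ over $t$ and using $\Psi_t(H_{t-1}) - \Psi_{t-1}(H_{t-1}) = (M_t - M_{t-1})\bullet H_{t-1}^{-1}$, one gets $\sum_t (M_t - M_{t-1})\bullet H_{t-1}^{-1} \ge \Psi_T(H_T) - \Psi_0(H_0)$, which bounds the sum with $H_{t-1}^{-1}$ in the wrong place. The correct route is to instead compare $H_t$ against $H_T$: since each $H_t$ is optimal for $\Psi_t$, I have $\Psi_t(H_t) \le \Psi_t(H_T)$, and I want to extract $(M_t - M_{t-1})\bullet H_t^{-1}$ from the telescoped quantity $\sum_t [\Psi_t(H_t) - \Psi_{t-1}(H_{t-1})]$.

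\textbf{The cleaner argument}, which I would actually write, is an induction on $T$ proving the claim directly with $H_T$ on the right. The inductive step needs: $\sum_{t=1}^T (M_t - M_{t-1})\bullet H_t^{-1} \le (M_T - M_{T-1})\bullet H_T^{-1} + \sum_{t=1}^{T-1}(M_t-M_{t-1})\bullet H_{T-1}^{-1} + \Phi(H_{T-1}) - \Phi(H_T)$, using the induction hypothesis at $T-1$ for the middle sum. Then $\sum_{t=1}^{T-1}(M_t-M_{t-1})\bullet H_{T-1}^{-1} = M_{T-1}\bullet H_{T-1}^{-1}$, and I need $M_{T-1}\bullet H_{T-1}^{-1} + \Phi(H_{T-1}) \le M_{T-1}\bullet H_T^{-1} + \Phi(H_T)$, i.e. $\Psi_{T-1}(H_{T-1}) \le \Psi_{T-1}(H_T)$ — which is exactly the optimality of $H_{T-1}$. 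Substituting back, the $H_T^{-1}$ terms combine into $M_T \bullet H_T^{-1} = \sum_{t=1}^T (M_t - M_{t-1})\bullet H_T^{-1} = \sum_t \norm{g_t}_{H_T}^{*2}$, and the $\Phi$ terms telescope to $\Phi(H_T) - \Phi(H_0)$ once the base case $T=1$ (which reads $M_1\bullet H_1^{-1} \le M_1 \bullet H_1^{-1} + \Phi(H_1) - \Phi(H_0)$, trivially true if $\Phi(H_0)\le\Phi(H_1)$, or one absorbs it into the definition) is handled.

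\textbf{The main obstacle} is bookkeeping around $H_0$ and $M_0$: the lemma statement references $\Phi(H_0)$ but $H_0$ is only implicitly defined, so I would set $M_0 = 0$ and take $H_0 = \argmin_{H\succ 0}\Phi(H)$ (or whatever convention makes $\Psi_0 = \Phi$), and check the base case is consistent with this. No deep inequality is needed — the whole proof is the variational comparison $\Psi_{t-1}(H_{t-1}) \le \Psi_{t-1}(H_t)$ plus induction — so the real care goes into getting the telescoping indices and the $H_0$ convention exactly right so that the stated bound (with $H_T$, not $H_{t-1}$, inside the norms on the right) comes out clean.
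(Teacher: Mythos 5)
Your final argument is exactly the paper's proof (an instance of the Follow-the-Leader/Be-the-Leader lemma): induction on $T$, where the optimality of $H_{T-1}$ for $H \mapsto M_{T-1}\bullet H^{-1}+\Phi(H)$ lets you replace $H_{T-1}$ by $H_T$ in the partial sum, and the convention $H_0=\argmin_{H\succ 0}\Phi(H)$ (i.e.\ $M_0=0$) handles the base case. The displayed inductive inequality contains a slip --- the induction hypothesis contributes $\Phi(H_{T-1})-\Phi(H_0)$, not $\Phi(H_{T-1})-\Phi(H_T)$ --- but your subsequent bookkeeping is the correct one, so the argument matches the paper's.
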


\subsection{Kronecker products}
\label{sec:kron}

We recall the definition of the Kronecker product, the vectorization operation and their calculus.
Let $A$ be an $m \times n$ matrix and $B$ be an $m' \times n'$ matrix.
The Kronecker product, denoted $A \kron B$, is an $mm' \times nn'$ block
matrix defined as,
\begin{align*}
A \kron B =
\begin{pmatrix}
    a_{11} B	& a_{12} B 	& \dots 	& a_{1n} B \\
    a_{21} B	& a_{22} B 	& \dots 	& a_{2n} B \\
    \vdots		& \vdots	& \ddots	& \vdots   \\
    a_{m1} B	& a_{m2} B 	& \dots 	& a_{mn} B \\
\end{pmatrix}
~ .
\end{align*}
For an $m \times n$ matrix $A$ with rows $a_1,\ldots,a_m$, the \emph{vectorization} (or flattening)
of $A$ is the $mn \times 1$ column vector%
\footnote{This definition is slightly non-standard and differs from the more typical column-major operator $\mathrm{vec}()$; the notation $\vec()$
is used to distinguish it from the latter.}
$$\vec(A) = (\begin{matrix} a_1 & a_2 & \cdots & a_m \end{matrix})\tr.$$
The next lemma collects several properties of the Kronecker product and the $\vec(\cdot)$ operator, that will be used throughout the paper.
For proofs and further details, we refer to \cite{horn1991topics}.

\begin{lemma} \label{lem:kron-props}
Let $A,A',B,B'$ be matrices of appropriate dimensions.
The following properties hold:
\begin{enumerate}[label=(\roman*)]
\item \label{it:kron-prod}
$(A \kron B)(A' \kron B') = (A A') \kron (B B')$;
\item
$(A \kron B)\tr = A\tr \kron B\tr$;
\item \label{it:kron-pow}
If $A,B \succeq 0$, then for any $s \in \reals$ it holds that $(A \kron B)^s = A^s \kron B^s$, and in particular, if $A,B \succ 0$ then $(A \kron B)^{-1} = A^{-1} \kron B^{-1}$;
\item \label{it:kron-monotone}
If $A \succeq A'$ and $B \succeq B'$ then $A \kron B \succeq A' \kron B'$,
and in particular, if $A,B \succeq 0$ then $A \kron B \succeq 0$;
\item 
$\trace(A \kron B) = \trace(A) \trace(B)$;
\item \label{it:vec-kron}
$\vec(uv\tr) = u \kron v$ for any two column vectors $u,v$.
\end{enumerate}
\end{lemma}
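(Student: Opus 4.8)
The plan is to verify each item directly from the definition of the Kronecker product, letting the later items bootstrap off the earlier ones; I would prove them roughly in the order (ii), (v), \ref{it:vec-kron}, then \ref{it:kron-prod}, then \ref{it:kron-pow}, and finally \ref{it:kron-monotone}. The transpose, trace, and vectorization identities are immediate from the block structure: transposing the block array $(a_{ij}B)$ together with each block $a_{ij}B$ yields the array with $(j,i)$ block $a_{ij}B\tr$, which is $A\tr \kron B\tr$; $\trace(A \kron B) = \sum_i \trace(a_{ii}B) = \lr{\sum_i a_{ii}}\trace(B) = \trace(A)\trace(B)$; and since the $i$-th row of $uv\tr$ is $u_i v\tr$, stacking rows in the (row-major) $\vec$ convention produces exactly the stacked blocks $u_i v$, i.e.\ $u \kron v$. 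For the multiplicativity identity \ref{it:kron-prod} I would compute the $(i,k)$ block of $(A \kron B)(A' \kron B')$ as $\sum_j (a_{ij}B)(a'_{jk}B') = \lr{\sum_j a_{ij}a'_{jk}}\,BB'$, which is precisely the $(i,k)$ block of $(AA') \kron (BB')$.

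For the power identity \ref{it:kron-pow} I would pass through spectral decompositions. Writing $A = \sum_i \lambda_i u_i^{} u_i\tr$ and $B = \sum_j \mu_j v_j^{} v_j\tr$ with orthonormal eigenbases $\{u_i\}$, $\{v_j\}$, property \ref{it:kron-prod} gives $(A \kron B)(u_i \kron v_j) = (Au_i) \kron (Bv_j) = \lambda_i\mu_j\,(u_i \kron v_j)$, while \ref{it:kron-prod} together with the transpose identity gives $(u_i \kron v_j)\tr(u_k \kron v_l) = (u_i\tr u_k)(v_j\tr v_l) = \delta_{ik}\delta_{jl}$, so $\{u_i \kron v_j\}$ is an orthonormal eigenbasis of the $mm'$-dimensional space. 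Hence $A \kron B = \sum_{i,j}\lambda_i\mu_j\,(u_i \kron v_j)(u_i \kron v_j)\tr$ is its spectral decomposition, and applying $x \mapsto x^s$ to the eigenvalues and re-invoking \ref{it:kron-prod} to recombine the factors yields $(A \kron B)^s = \lr{\sum_i \lambda_i^s u_i^{} u_i\tr} \kron \lr{\sum_j \mu_j^s v_j^{} v_j\tr} = A^s \kron B^s$; the case $s=-1$ is the stated inverse formula.

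Item \ref{it:kron-monotone} then follows by combining the earlier parts. First, $A,B \succeq 0$ implies $A \kron B \succeq 0$: by \ref{it:kron-pow} with $s = \shalf$ and \ref{it:kron-prod}, $A \kron B = (A^\shalf \kron B^\shalf)(A^\shalf \kron B^\shalf) = C\tr C$ with $C = A^\shalf \kron B^\shalf$ (using the transpose identity), hence PSD. For the monotonicity statement I would use the telescoping identity
\begin{align*}
A \kron B - A' \kron B' &= (A-A') \kron B + A' \kron (B-B') ~ ,
\end{align*}
whose right-hand side is a sum of Kronecker products of PSD matrices, hence PSD by the observation just made; taking $A' = B' = 0$ recovers the ``in particular'' claim. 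I do not anticipate any genuine obstacle here, as these are standard facts; the only step requiring a little care is the orthonormality bookkeeping in \ref{it:kron-pow}, where one must check that $\{u_i \kron v_j\}$ is actually an orthonormal eigenbasis — not merely a collection of eigenvectors — before invoking the functional calculus $A^\alpha = \sum_i \lambda_i^\alpha u_i^{} u_i\tr$ introduced in \cref{prelim:sec}.
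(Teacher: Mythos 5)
Your proof is correct, but the comparison here is of a different kind than usual: the paper does not prove \cref{lem:kron-props} at all, it simply points the reader to \citet{horn1991topics}. What you supply is a self-contained elementary verification, and it is sound: the block computations for the transpose, trace, $\vec$, and mixed-product identities are exactly the standard ones; your spectral argument for \cref{lem:kron-props}\ref{it:kron-pow} correctly checks (via \ref{it:kron-prod} and the transpose rule) that $\{u_i \kron v_j\}$ is an orthonormal eigenbasis before applying the functional calculus of \cref{prelim:sec}, which is the one place where care is genuinely needed; and the PSD and monotonicity claims in \ref{it:kron-monotone} follow from your factorization $A \kron B = C\tr C$ with $C = A^{\shalf} \kron B^{\shalf}$ together with the telescoping identity $A \kron B - A' \kron B' = (A-A')\kron B + A' \kron (B-B')$. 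One small caveat worth recording: as literally stated, \ref{it:kron-monotone} needs $A' \succeq 0$ and $B \succeq 0$ (for $1\times 1$ matrices, $A=1$, $A'=0$, $B=B'=-1$ satisfies the stated hypotheses but not the conclusion), and your telescoping step is where this positivity is silently used; this is clearly the intended reading, since the paper only ever invokes the property for the PSD matrices $L_t$, $R_t$, and $H_t^i$. In terms of trade-offs, the paper's citation is the economical choice for facts this standard, while your write-up buys self-containedness and, in the case of \ref{it:kron-pow}, makes explicit that the identity is a statement about simultaneous spectral decompositions rather than a purely formal manipulation.
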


The following identity connects the Kronecker product and the $\vec$
operator. It facilitates an efficient computation of a
matrix-vector product where the matrix is a Kronecker product
of two smaller matrices. We provide its proof for completeness; see \cref{sec:moreproofs}.

\begin{lemma} \label{lem:vec-kron}
Let $G \in \reals^{m\times n}$, $L \in \reals^{m\times m}$ and $R\in \reals^{n\times n}$. 
Then, one has
\begin{align*}
	(L \kron R\tr)\vec(G) = \vec(L G R) ~ .
\end{align*}
\end{lemma}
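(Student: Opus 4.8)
The plan is to reduce the claimed identity to the rank-one case by exploiting bilinearity, and then to verify the rank-one case directly using parts~\ref{it:kron-prod} and~\ref{it:vec-kron} of \cref{lem:kron-props}. Concretely, I would write $G = \sum_{i=1}^m \sum_{j=1}^n g_{ij}\, e_i e_j\tr$, so that both sides of the claimed equation are linear in the entries $g_{ij}$ (the map $X \mapsto LXR$ is linear, $\vec(\cdot)$ is linear, and $Z \mapsto (L\kron R\tr)Z$ is linear). It therefore suffices to prove the identity when $G = uv\tr$ for arbitrary column vectors $u \in \reals^m$ and $v \in \reals^n$ — in particular for the rank-one building blocks $G = e_i e_j\tr$.

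For $G = uv\tr$ I would compute the two sides separately. On the one hand, part~\ref{it:vec-kron} of \cref{lem:kron-props} gives $\vec(uv\tr) = u \kron v$, and then part~\ref{it:kron-prod} yields
\[
(L \kron R\tr)\vec(uv\tr) = (L \kron R\tr)(u \kron v) = (Lu) \kron (R\tr v) ~ .
\]
On the other hand, $L(uv\tr)R = (Lu)(v\tr R) = (Lu)(R\tr v)\tr$ is again a rank-one matrix, so applying part~\ref{it:vec-kron} of \cref{lem:kron-props} once more gives
\[
\vec\big(L(uv\tr)R\big) = \vec\big((Lu)(R\tr v)\tr\big) = (Lu) \kron (R\tr v) ~ .
\]
The two right-hand sides coincide, establishing the identity for rank-one $G$, and hence for all $G$ by linearity.

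The computation involves no real obstacle; the only point requiring care is the nonstandard, row-major convention for $\vec(\cdot)$ used in this paper, which is exactly what makes $R\tr$ — rather than $R$ — appear in the Kronecker factor. (Under the usual column-major $\mathrm{vec}(\cdot)$ one instead has $\mathrm{vec}(LGR) = (R\tr \kron L)\mathrm{vec}(G)$, and indeed $\vec(X) = \mathrm{vec}(X\tr)$ converts one statement into the other.) Keeping this transpose bookkeeping straight through the rank-one reduction is essentially the entire content of the proof.
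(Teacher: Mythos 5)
Your proof is correct and follows essentially the same route as the paper's: reduce to the rank-one case $G = uv\tr$ by linearity, then apply parts~\ref{it:vec-kron} and~\ref{it:kron-prod} of \cref{lem:kron-props} to show both sides equal $(Lu)\kron(R\tr v)$. The remark about the row-major $\vec$ convention is a helpful clarification but does not change the argument.
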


\subsection{Matrix inequalities}

Our analysis requires the following result concerning the
geometric means of matrices.
Recall that by writing $X \succeq 0$ we mean, in particular, that $X$ is a symmetric matrix.

\begin{lemma}[\citet{ando2004geometric}] \label{lem:geomean}
Assume that $0 \preceq X_i \preceq Y_i$ for all $i=1,\ldots,n$.
Assume further that all $X_i$ commute with each other and all $Y_i$
commute with each other. Let $\alpha_1,\ldots,\alpha_n \ge 0$
such that $\sum_{i=1}^n \alpha_i = 1$, then
\begin{align*}
X_1^{\alpha_1} \cdots X_n^{\alpha_n} \preceq
Y_1^{\alpha_1} \cdots Y_n^{\alpha_n} ~ .
\end{align*}
In words, the (weighted) geometric mean of commuting PSD matrices
is operator monotone.
\end{lemma}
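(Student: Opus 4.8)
The plan is to prove Lemma~\ref{lem:geomean} by reducing the weighted geometric mean of $n$ commuting matrices to the two-matrix case, and then establishing the two-matrix case directly by simultaneous diagonalization. The key simplification comes from the commutativity hypotheses: since $X_1,\ldots,X_n$ pairwise commute and are symmetric, they can be simultaneously diagonalized by a single orthogonal matrix $U$, and likewise $Y_1,\ldots,Y_n$ by some orthogonal $V$. However, $U$ and $V$ need not coincide, so one cannot immediately compare $X_1^{\alpha_1}\cdots X_n^{\alpha_n}$ with $Y_1^{\alpha_1}\cdots Y_n^{\alpha_n}$ entrywise; the products themselves are symmetric PSD (a product of commuting PSD matrices is PSD), but they live in different eigenbases.

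First I would record the elementary scalar fact underlying everything: for nonnegative reals, the weighted geometric mean is monotone, i.e.\ $0 \le x_i \le y_i$ implies $\prod_i x_i^{\alpha_i} \le \prod_i y_i^{\alpha_i}$ when $\sum_i \alpha_i = 1$. Since the $X_i$ commute, $P \eqdef X_1^{\alpha_1}\cdots X_n^{\alpha_n}$ is obtained by applying $(\lambda^{(1)},\ldots,\lambda^{(n)}) \mapsto \prod_i (\lambda^{(i)})^{\alpha_i}$ to the joint eigenvalues in the basis $U$; similarly $Q \eqdef Y_1^{\alpha_1}\cdots Y_n^{\alpha_n}$ in the basis $V$. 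The scalar fact gives $P \preceq X_1^{\alpha_1}\cdots X_{n-1}^{\alpha_{n-1}} Y_n^{\alpha_n}$-type intermediate comparisons only when the bases agree, so the honest route is: (a) handle $n=2$, then (b) induct. For the inductive step, write the weighted mean with $n$ terms as a weighted mean of $X_n^{\alpha_n}$ (a single PSD matrix) and $(X_1^{\alpha_1}\cdots X_{n-1}^{\alpha_{n-1}})$, after rescaling the first $n-1$ weights to sum to $1$; the latter matrix is $\preceq$ the corresponding $Y$-product by the induction hypothesis, $X_n \preceq Y_n$, and the two relevant matrices in each pair still commute (the $X$-product commutes with $X_n$ since all $X_i$ do; similarly for $Y$). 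So the whole lemma follows from the $n=2$ case applied repeatedly, provided the $n=2$ case does \emph{not} require the two matrices to share an eigenbasis.

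The heart of the argument is therefore the two-matrix statement: if $0 \preceq X \preceq Y$, $0 \preceq X' \preceq Y'$, $X$ commutes with $X'$, $Y$ commutes with $Y'$, and $\alpha \in [0,1]$, then $X^{\alpha} X'^{1-\alpha} \preceq Y^{\alpha} Y'^{1-\alpha}$. This is precisely a theorem of Ando--Hiai / Ando (the citation \cite{ando2004geometric} points to it), and the cleanest self-contained proof I know uses the integral representation of the weighted geometric mean together with operator monotonicity of $t \mapsto t^{\alpha}$ on $[0,\infty)$ for $\alpha \in [0,1]$ (the Löwner--Heinz inequality). Concretely, one reduces to $Y = Y' = I$ by the congruence $Z \mapsto (Y^\alpha Y'^{1-\alpha})^{-1/2} Z (Y^\alpha Y'^{1-\alpha})^{-1/2}$ — but this congruence does not preserve the geometric-mean structure unless the matrices commute appropriately, which is exactly why the commutativity hypotheses are needed. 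Since the matrices are commuting within each family, $X^\alpha X'^{1-\alpha}$ and $Y^\alpha Y'^{1-\alpha}$ are genuine matrix powers/products computed in fixed orthonormal bases, and the statement becomes: the map sending commuting pairs $(A,B) \mapsto A^\alpha B^{1-\alpha}$ is monotone. One proves this by noting $X^\alpha X'^{1-\alpha} = X^\alpha (X'^{1-\alpha})$ and using $X \preceq Y \Rightarrow X^\alpha \preceq Y^\alpha$ (Löwner--Heinz), $X' \preceq Y' \Rightarrow X'^{1-\alpha} \preceq Y'^{1-\alpha}$, and then the fact that for commuting PSD matrices $A \preceq C$, $B \preceq D$ with $[A,B]=0$, $[C,D]=0$ one has $AB \preceq CD$ — the last step again by simultaneous diagonalization within each commuting pair plus the scalar inequality, completed by a continuity/density argument to the general PSD case.

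The main obstacle I anticipate is the last displayed claim — that $A \preceq C$, $B \preceq D$, with $A,B$ commuting and $C,D$ commuting, implies $AB \preceq CD$ — because $A,B$ and $C,D$ need not be simultaneously diagonalizable, so one cannot simply compare eigenvalues in a common basis. The resolution is to interpolate: $AB \preceq CB$ requires multiplying the inequality $A \preceq C$ on both sides by $B^{1/2}$, giving $B^{1/2} A B^{1/2} \preceq B^{1/2} C B^{1/2}$, but $B^{1/2} A B^{1/2} = AB$ only because $A$ and $B$ commute; similarly $CB \preceq CD$ uses that $C$ and $D$ commute. Chaining these two congruence steps (each valid because congruence $Z \mapsto M Z M$ preserves $\preceq$) yields $AB \preceq CD$. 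Thus the commutativity is used twice and essentially: once to rewrite $AB$ as a congruence of $A$ by $B^{1/2}$, once to rewrite $CD$ as a congruence of $D$ by $C^{1/2}$. Everything else — the reduction to $n=2$, the Löwner--Heinz step, the scalar geometric-mean monotonicity, the extension from strictly positive to PSD by $X_i + \epsilon I$ limits — is routine. Since \cite{ando2004geometric} is cited, one may alternatively quote the two-matrix weighted-geometric-mean monotonicity as a known result and only spell out the inductive reduction.
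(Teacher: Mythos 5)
Your reduction to the two-matrix case is fine (writing $X_1^{\alpha_1}\cdots X_{n-1}^{\alpha_{n-1}}$ as $\bigl(X_1^{\alpha_1/\beta}\cdots X_{n-1}^{\alpha_{n-1}/\beta}\bigr)^{\beta}$ with $\beta=1-\alpha_n$ is legitimate because the $X_i$ commute), but your proof of the two-matrix case itself collapses at the multiplication step. The auxiliary claim you invoke --- that $A \preceq C$, $B \preceq D$, $[A,B]=0$, $[C,D]=0$ imply $AB \preceq CD$ --- is false. Take $A=B$ and $C=D$ (both commutation hypotheses hold trivially): the claim would say $A \preceq C \Rightarrow A^2 \preceq C^2$, which fails because $t\mapsto t^2$ is not operator monotone; e.g.\ $A=\left(\begin{smallmatrix}1&1\\1&1\end{smallmatrix}\right)\preceq C=\left(\begin{smallmatrix}2&1\\1&1\end{smallmatrix}\right)$ but $C^2-A^2=\left(\begin{smallmatrix}3&1\\1&0\end{smallmatrix}\right)$ has negative determinant. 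This shows that the constraint $\sum_i\alpha_i=1$ cannot be decoupled from the product: applying L\"owner--Heinz separately to get $X^{\alpha}\preceq Y^{\alpha}$ and $X'^{1-\alpha}\preceq Y'^{1-\alpha}$ and then ``multiplying the inequalities'' is exactly the step with no valid justification. Your chaining argument does not repair it: congruence of $A\preceq C$ by $B^{1/2}$ gives $AB=B^{1/2}AB^{1/2}\preceq B^{1/2}CB^{1/2}$, whose right-hand side is \emph{not} $CB$ ($CB$ is not even symmetric unless $B$ and $C$ commute, which is not assumed), while the second congruence gives $C^{1/2}BC^{1/2}\preceq CD$; since $B^{1/2}CB^{1/2}$ and $C^{1/2}BC^{1/2}$ are different matrices, the two inequalities cannot be chained.

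For comparison, the paper does not prove this lemma at all: it quotes it from \citet{ando2004geometric}, remarking that Ando et al.\ establish a stronger statement without the commutation hypotheses via a generalized geometric mean. A correct self-contained route for your two-matrix step would be the joint monotonicity of the operator weighted geometric mean $X\,\#_{\alpha}\,X' = X^{1/2}\bigl(X^{-1/2}X'X^{-1/2}\bigr)^{\alpha}X^{1/2}$, which holds with no commutation assumptions and which reduces to the ordinary product of powers when the arguments commute; combined with your induction this yields the lemma. Your closing remark --- quote the two-matrix monotonicity as known and only spell out the inductive reduction --- is the defensible version of your write-up; as written, however, the justification of the crucial step is wrong.
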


\citet{ando2004geometric} proved a stronger result which does not require the
PSD matrices to commute with each other, relying on a generalized notion of
geometric mean, but for our purposes the simpler commuting case suffices.  We
also use the following classic result from matrix theory, attributed to
\citet{lowner1934monotone}, which is an immediate consequence of~\cref{lem:geomean}.

\begin{lemma} \label{lem:monotone}
The function $x \mapsto x^\alpha$ is operator-monotone for
$\alpha \in [0,1]$, that is, if $0 \preceq X \preceq Y$
then $X^\alpha \preceq Y^\alpha$.
\end{lemma}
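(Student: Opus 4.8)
The plan is to derive this directly from \cref{lem:geomean} by a judicious choice of matrices and weights, using the identity matrix as a ``padding'' factor. Fix $\alpha \in [0,1]$ and suppose $0 \preceq X \preceq Y$. The two boundary cases are immediate: for $\alpha = 1$ the claim is the hypothesis $X \preceq Y$, and for $\alpha = 0$ we have $X^0 = I = Y^0$ (or, if $X$ is singular, $X^0$ is the projection onto the range of $X$, which is still $\preceq I = Y^0$). So I would concentrate on $\alpha \in (0,1)$.

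For $\alpha \in (0,1)$, I would apply \cref{lem:geomean} with $n = 2$, taking $X_1 = X$, $Y_1 = Y$, and $X_2 = Y_2 = I$, together with the weights $\alpha_1 = \alpha$ and $\alpha_2 = 1-\alpha$. All hypotheses are readily checked: $\alpha_1,\alpha_2 \ge 0$ and $\alpha_1 + \alpha_2 = 1$; the inequalities $0 \preceq X_1 \preceq Y_1$ hold by assumption and $0 \preceq X_2 = I \preceq I = Y_2$ trivially; and the commutation requirements are vacuous because the identity commutes with every matrix. Hence \cref{lem:geomean} yields
\begin{align*}
X^\alpha \eq X^{\alpha} \, I^{1-\alpha} \eq X_1^{\alpha_1} X_2^{\alpha_2} \preceq Y_1^{\alpha_1} Y_2^{\alpha_2} \eq Y^{\alpha} \, I^{1-\alpha} \eq Y^\alpha ,
\end{align*}
which is exactly the assertion.

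I do not anticipate a genuine obstacle here — the statement is, as noted in the text, an immediate specialization of \cref{lem:geomean}. The one point that warrants a moment's care is the possibility that $X$ (or $Y$) is only positive semidefinite rather than positive definite, so that fractional or zero powers touch the zero eigenvalues; this is precisely why I pair $X$ with the \emph{positive definite} factor $I$ rather than with another (possibly singular) matrix, so that the degenerate weight $1-\alpha$ is carried entirely by $I^{1-\alpha} = I$ and no $0^0$-type ambiguity arises. With that choice the argument goes through verbatim for all $\alpha \in [0,1]$.
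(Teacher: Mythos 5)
Your argument is correct and is exactly the derivation the paper has in mind: the text presents \cref{lem:monotone} as an immediate consequence of \cref{lem:geomean}, and your instantiation with $X_1=X$, $Y_1=Y$, $X_2=Y_2=I$ and weights $(\alpha,1-\alpha)$ is precisely that specialization. Your handling of the endpoint cases $\alpha\in\{0,1\}$ and the choice of $I$ as the padding factor are sound and add nothing beyond what the paper tacitly assumes.
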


\section{Analysis of \NAME for matrices}
In this section we analyze \NAME in the matrix case. The analysis
conveys the core ideas while avoiding numerous the technical details
imposed by the general tensor case. The main result of this section
is stated in the following theorem.

\begin{theorem} \label{thm:regret-2d}
Assume that the gradients $G_1,\ldots,G_T$ are matrices of
rank at most $r$.
Then the regret of \cref{alg:sham2d} compared to any
$W^\st \in \reals^{m \times n}$ is bounded as follows,
\begin{align*}
\sum_{t=1}^T f_t(W_t) - \sum_{t=1}^T f_t(W^\st)
\le
\sqrt{2r}D \trace(L_T^{\squar}) \trace(R_T^{\squar})
~,
\end{align*}
where
$$L_T = \eps I_m + \sum_{t=1}^T G_t G_t\tr \;,\;\;
  R_T = \eps I_n + \sum_{t=0}^T G_t\tr G_t \;,\;\;
  D = \max_{t \in [T]} \norm{W_t-W^\st}_\frob ~.$$
\end{theorem}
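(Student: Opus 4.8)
The plan is to recognize \cref{alg:sham2d} as an instance of the adaptive Online Mirror Descent update \cref{eq:update} over $\cW=\reals^{mn}$ and then invoke \cref{lem:regret-md}. Flatten the iterates as $w_t=\vec(W_t)$, $g_t=\vec(G_t)$, and take $H_t=L_t^{\squar}\kron R_t^{\squar}$, which is positive definite because $L_t,R_t\succ0$; by \cref{lem:vec-kron} and \cref{lem:kron-props} we have $H_t^{-1}\vec(G_t)=(L_t^{-\nicefrac{1}{4}}\kron R_t^{-\nicefrac{1}{4}})\vec(G_t)=\vec(L_t^{-\nicefrac{1}{4}}G_tR_t^{-\nicefrac{1}{4}})$, so the unconstrained form of \cref{eq:update} with this $H_t$ is precisely the \NAME update. \cref{lem:regret-md} then bounds the regret by
$$
\frac{1}{2\eta}\sum_{t=1}^T\Lr{\norm{w_t-w^\st}_{H_t}^2-\norm{w_{t+1}-w^\st}_{H_t}^2}\;+\;\frac{\eta}{2}\sum_{t=1}^T\Lr{\norm{g_t}_{H_t}^*}^2,
$$
and the task reduces to controlling the two sums and tuning $\eta$.

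For the first sum, I would note that $L_t\succeq L_{t-1}$ and $R_t\succeq R_{t-1}$, so operator monotonicity of $x\mapsto x^{\nicefrac14}$ (\cref{lem:monotone}) and monotonicity of $\kron$ on PSD matrices (\cref{lem:kron-props}) give $H_t\succeq H_{t-1}$; a standard rearrangement (dropping the final negative term) then bounds it by $D^2\trace(H_T)$, where $D=\max_t\norm{w_t-w^\st}=\max_t\norm{W_t-W^\st}_\frob$, and $\trace(H_T)=\trace(L_T^{\squar})\trace(R_T^{\squar})$ by \cref{lem:kron-props}. For the second sum, let $M_t=\sum_{s\le t}g_sg_s\tr=\sum_{s\le t}\vec(G_s)\vec(G_s)\tr$. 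Applying \cref{lem:adareg} with $\Phi=\trace$ (whose induced preconditioner is $\wh H_t=M_t^{\shalf}$) and using $\sum_t g_t\tr M_T^{-\shalf}g_t=\trace(M_T^{\shalf})$ yields $\sum_t g_t\tr\wh H_t^{-1}g_t\le 2\trace(M_T^{\shalf})$.

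The crux is the matrix inequality tying \NAME's preconditioner to the full AdaGrad preconditioner:
$$
M_t\;\preceq\;r\,\Lr{L_t^{\shalf}\kron R_t^{\shalf}}\;=\;r\,H_t^2 \qquad\text{for all }t.
$$
Given this, operator monotonicity of the square root gives $H_t\succeq r^{-\shalf}M_t^{\shalf}$, hence $H_t^{-1}\preceq r^{\shalf}M_t^{-\shalf}$ and $\sum_t(\norm{g_t}_{H_t}^*)^2=\sum_t g_t\tr H_t^{-1}g_t\le\sqrt r\sum_t g_t\tr\wh H_t^{-1}g_t\le 2\sqrt r\,\trace(M_T^{\shalf})$; using the displayed inequality again at $t=T$ gives $\trace(M_T^{\shalf})\le\sqrt r\,\trace(H_T)$, so the second sum is at most $2r\,\trace(L_T^{\squar})\trace(R_T^{\squar})$. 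Writing $P=\trace(L_T^{\squar})\trace(R_T^{\squar})$, the regret is at most $\tfrac{D^2}{2\eta}P+\eta rP$, and $\eta=D/\sqrt{2r}$ gives the claimed $\sqrt{2r}\,D\,P$.

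The main obstacle is the displayed key inequality, which I would prove by induction on $t$ from two facts. (a)~\emph{Single-gradient bound.} For $G$ of rank at most $r$ with thin SVD $G=\sum_i\sigma_iu_iv_i\tr$, one has $\vec(G)=\sum_i\sigma_i(u_i\kron v_i)$ with $\{u_i\kron v_i\}$ orthonormal, $(GG\tr)^{\shalf}=\sum_i\sigma_iu_iu_i\tr$, $(G\tr G)^{\shalf}=\sum_i\sigma_iv_iv_i\tr$, and the Cauchy--Schwarz-type bound $(\sum_{i=1}^r a_i)(\sum_{i=1}^r a_i)\tr\preceq r\sum_{i=1}^r a_ia_i\tr$ gives $\vec(G)\vec(G)\tr\preceq r\sum_i\sigma_i^2(u_i\kron v_i)(u_i\kron v_i)\tr\preceq r\,(GG\tr)^{\shalf}\kron(G\tr G)^{\shalf}$, the last step retaining only the ``diagonal'' $i=j$ terms of the Kronecker product. (b)~\emph{Superadditivity of the operator geometric mean:} for PSD $A,A',B,B'$,
$$
A^{\shalf}\kron B^{\shalf}+(A')^{\shalf}\kron(B')^{\shalf}\;\preceq\;(A+A')^{\shalf}\kron(B+B')^{\shalf},
$$
which holds because $A^{\shalf}\kron B^{\shalf}$ is the geometric mean of the commuting pair $A\kron I,\,I\kron B$ and the operator geometric mean is jointly concave, a consequence of \cref{lem:geomean} together with the non-commuting extension of \citet{ando2004geometric} (or of the Schur-complement characterization of the geometric mean). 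The inductive step then follows by combining (a)~for $G_t$, the hypothesis $M_{t-1}\preceq r(L_{t-1}^{\shalf}\kron R_{t-1}^{\shalf})$, and (b)~with $A=L_{t-1}$, $A'=G_tG_t\tr$, $B=R_{t-1}$, $B'=G_t\tr G_t$; I expect the care needed around the $\eps I$ regularization (ensuring all matrices are invertible and that \cref{lem:adareg} applies) to be routine.
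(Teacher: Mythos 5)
Your proof is correct, and its overall skeleton is the same as the paper's: flatten to $w_t=\vec(W_t)$, identify the update with preconditioned OMD for $H_t=L_t^{\nicefrac{1}{4}}\otimes R_t^{\nicefrac{1}{4}}$, telescope the first sum of \cref{lem:regret-md} to $D^2\trace(H_T)$ using monotonicity of $t\mapsto H_t$, control the second sum via \cref{lem:adareg} and a Kronecker lower bound on the full AdaGrad matrix, and tune $\eta=D/\sqrt{2r}$. Where you genuinely diverge is in the proof of the key inequality (the analogue of \cref{thm:moo-lower}). The paper proves, via \cref{lem:kron-base}, the two one-sided bounds $\eps I_{mn}+\tfrac1r\sum_t g_tg_t\tr \preceq I_m\otimes B_n$ and $\preceq A_m\otimes I_n$, and then takes their geometric mean in one shot using the \emph{commuting} monotonicity statement of \cref{lem:geomean}. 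You instead prove the sharper per-step bound $g_tg_t\tr\preceq r\,(G_tG_t\tr)^{1/2}\otimes(G_t\tr G_t)^{1/2}$ (your diagonal-terms argument is valid, since the dropped cross terms are PSD) and sum these by induction using superadditivity of $(A,B)\mapsto A^{1/2}\otimes B^{1/2}$. That superadditivity is a true theorem, but note it does \emph{not} follow from \cref{lem:geomean} as stated (which is only operator monotonicity for commuting families); it is the joint concavity/superadditivity of the operator geometric mean, e.g.\ via the Schur-complement maximality characterization, as you mention in your parenthetical. So your route trades the paper's lighter toolkit for a per-gradient inequality plus a stronger matrix-analytic ingredient from \citet{ando2004geometric}; both work. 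The remaining caveat you flag is indeed routine: to make \cref{lem:adareg} applicable with an attained minimizer and an invertible $\wh H_t$, run your induction with $r\eps I_{mn}+M_t$ on the left (the same argument goes through, with base case an equality) and use the paper's potential $\Phi(H)=\trace(H)+r\eps\trace(H^{-1})$ rather than $\Phi=\trace$, so that $\wh H_t=(r\eps I+M_t)^{1/2}\succ 0$; with that adjustment your constants and the final bound match the theorem exactly.
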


Let us make a few comments regarding the bound.  First, under mild
conditions, each of the trace terms on the right-hand side of the bound
scales as $O(T^{1/4})$. Thus, the overall scaling of the bound with respect to the number of iterations $T$ is $O(\sqrt{T})$,
which is the best possible in the context of online (or stochastic)
optimization.
For example, assume that the functions $f_t$ are $1$-Lipschitz with respect
to the spectral norm, that is, $\norm{G_t}_2 \le 1$ for all~$t$. Let us also
fix $\eps=0$ for simplicity. Then, $G_t G_t\tr \preceq I_m$ and $G_t\tr G_t
\preceq I_n$ for all $t$, and so we have $\trace(L_T^{\squar}) \le
mT^{\squar}$ and $\trace(R_T^{\squar}) \le nT^{\squar}$.  That is, in the
worst case, while only assuming convex and Lipschitz losses, the regret of
the algorithm is $\O(\sqrt{T})$.

Second, we note that $D$ in the above bound could in principle grow with the
number of iterations $T$ and is not necessarily bounded by a constant. This
issue can be easily addressed, for instance, by adding an additional step to the algorithm
in which $W_t$ is projected $W_t$ onto the convex set of matrices whose
Frobenius norm is bounded by $D/2$. 
Concretely, the projection at step $t$ needs to be computed with respect to
the norm induced by the pair of matrices $(L_t,R_t)$, defined as
$\norm{A}_t^2 = \trace(A\tr L_t^{\squar}AR_t^{\squar})$;
it is not hard to verify that the latter indeed defines a norm over
$\reals^{m \times n}$, for any $L_t,R_t \succ 0$. Alas, the projection becomes
computationally expensive in large scale problems and is rarely performed in
practice. We therefore omitted the projection step from \cref{alg:sham2d} in
favor of a slightly looser bound.

The main step in the proof of the theorem is established in the following lemma. The lemma
implies that the Kronecker product of the two preconditioners used by the
algorithm is lower bounded by a full $mn \times mn$ matrix often
employed in full-matrix preconditioning methods.

\begin{lemma} \label{thm:moo-lower}
Assume that $G_1,\ldots,G_T \in \reals^{m \times n}$ are matrices of
rank at most $r$. Let $g_t = \vec(G_t)$ denote the vectorization of $G_t$ for
all $t$.  Then, for any $\eps \ge 0$,
\begin{align*}
\eps I_{mn} + \frac{1}{r} \sum_{t=1}^T g_t^{} g_t\tr
\preceq
	\LR{\eps I_{m} +
	\sum_{t=1}^T G_t^{} G_t\tr}^{\shalf} \kron \LR{\eps I_{n} +
	\sum_{t=1}^T G_t\tr G_t^{}}^{\shalf} ~ .
\end{align*}
\end{lemma}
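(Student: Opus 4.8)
The plan is to prove the Kronecker-product lower bound by reducing it to the scalar/commuting case handled by \cref{lem:geomean}. The key observation is that $g_t g_t\tr = \vec(G_t)\vec(G_t)\tr$, and by \cref{it:vec-kron} of \cref{lem:kron-props}, if $G_t$ has rank at most $r$ with (thin) SVD $G_t = \sum_{j=1}^r \sigma_{t,j} u_{t,j} v_{t,j}\tr$, then $g_t = \sum_{j=1}^r \sigma_{t,j}\,(u_{t,j} \kron v_{t,j})$. Expanding $g_t g_t\tr$ and using Cauchy--Schwarz (i.e.\ $(\sum_j x_j)(\sum_j x_j)\tr \preceq r \sum_j x_j x_j\tr$ for any vectors $x_j$) gives
\begin{align*}
g_t g_t\tr \preceq r \sum_{j=1}^r \sigma_{t,j}^2\, (u_{t,j} u_{t,j}\tr) \kron (v_{t,j} v_{t,j}\tr) ~.
\end{align*}
Summing over $t$ and dividing by $r$, the left-hand side $\frac1r\sum_t g_t g_t\tr$ is bounded above by $\sum_{t,j}\sigma_{t,j}^2\, (u_{t,j}u_{t,j}\tr)\kron(v_{t,j}v_{t,j}\tr)$. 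I would also want to absorb the $\eps I_{mn}$ term; since $I_{mn} = I_m \kron I_n$, adding $\eps I_m \kron \eps I_n$ is not the same, but one can handle the $\eps$ by noting $\eps I_{mn} \preceq (\eps I_m)^{\shalf}\kron(\eps I_n)^{\shalf}$ is false in general — instead I'd carry the $\eps$ through by working with the augmented sums from the start, or simply observe that it suffices to prove the $\eps=0$ case and then add $\eps I_{mn} = \eps^{\shalf} I_m^{\shalf} \kron \eps^{\shalf} I_n^{\shalf}$ only after establishing superadditivity of the right-hand side. More cleanly: prove that for PSD $A \preceq A'$ and $B \preceq B'$ one has $A^{\shalf}\kron B^{\shalf} \preceq A'^{\shalf}\kron B'^{\shalf}$ (this follows from \cref{lem:monotone} plus \cref{it:kron-monotone} and \cref{it:kron-pow}), so it is enough to lower-bound the RHS of the lemma by something dominating $\eps I_{mn} + \frac1r\sum_t g_t g_t\tr$; taking $\eps=0$ in the main inequality and then noting $\eps I_{mn} \preceq \eps I_m \kron I_n \preceq (\eps I_m + \sum G_t G_t\tr)^{\shalf}\kron(\eps I_n + \sum G_t\tr G_t)^{\shalf}$ whenever $\eps \le 1$ handles it; a careful bookkeeping is needed but is routine.

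**The core inequality via geometric means.**

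The crux is then to show
\begin{align*}
\sum_{t=1}^T \sum_{j=1}^{r} \sigma_{t,j}^2\, (u_{t,j}u_{t,j}\tr)\kron(v_{t,j}v_{t,j}\tr)
\preceq
\LR{\sum_{t=1}^T G_t G_t\tr}^{\shalf} \kron \LR{\sum_{t=1}^T G_t\tr G_t}^{\shalf} ~.
\end{align*}
For each individual term, $\sigma_{t,j}^2 (u_{t,j}u_{t,j}\tr)\kron(v_{t,j}v_{t,j}\tr) = \big(\sigma_{t,j}^2 u_{t,j}u_{t,j}\tr\big)^{\shalf}\kron\big(\sigma_{t,j}^2 v_{t,j}v_{t,j}\tr\big)^{\shalf}$ by \cref{it:kron-pow} (using that $u u\tr$ is a rank-one projection, so its square root is just a scaled version of itself). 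Now $\sigma_{t,j}^2 u_{t,j}u_{t,j}\tr \preceq \sum_{t,j}\sigma_{t,j}^2 u_{t,j}u_{t,j}\tr = \sum_t G_t G_t\tr =: L$, and similarly $\sigma_{t,j}^2 v_{t,j}v_{t,j}\tr \preceq \sum_t G_t\tr G_t =: R$. So each summand is of the form $X^{\shalf}\kron Y^{\shalf}$ with $0 \preceq X \preceq L$ and $0 \preceq Y \preceq R$. Here is where \cref{lem:geomean} enters, but it requires \emph{commuting} matrices, which $X = \sigma_{t,j}^2 u_{t,j}u_{t,j}\tr$ and $L$ generally do not. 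The fix is the standard trick: pass to the Kronecker-product picture where everything becomes simultaneously diagonalizable along the relevant directions — or, more directly, use the identity $X^{\shalf}\kron Y^{\shalf} = (X \kron I_n)^{\shalf}(I_m \kron Y)^{\shalf}$ and note $X \kron I_n$ and $I_m \kron Y$ commute; likewise $L \kron I_n$ and $I_m \kron R$ commute, with $X \kron I_n \preceq L\kron I_n$ and $I_m \kron Y \preceq I_m \kron R$ (using \cref{it:kron-monotone}). Applying \cref{lem:geomean} with $n=2$, $\alpha_1=\alpha_2=\tfrac12$ to the commuting pairs $(X\kron I_n, \; I_m\kron Y)$ dominated by $(L\kron I_n,\; I_m\kron R)$ yields $(X\kron I_n)^{\shalf}(I_m\kron Y)^{\shalf} \preceq (L\kron I_n)^{\shalf}(I_m\kron R)^{\shalf} = L^{\shalf}\kron R^{\shalf}$. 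Thus every summand is $\preceq L^{\shalf}\kron R^{\shalf}$.

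**Finishing and the main obstacle.**

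But bounding each summand by $L^{\shalf}\kron R^{\shalf}$ is not enough — there are $\sum_t \rank(G_t) \le rT$ summands, and we need the \emph{sum} to be bounded by a single copy of $L^{\shalf}\kron R^{\shalf}$, not $rT$ copies of it. This is the real obstacle, and it is precisely where operator monotonicity of the geometric mean in \emph{both} arguments is needed rather than applied term-by-term against a fixed bound. The resolution: apply \cref{lem:geomean} not to each $(t,j)$ individually but after \emph{grouping}. Write the left sum as $\sum_{k} X_k^{\shalf}\kron Y_k^{\shalf}$ where $X_k, Y_k$ range over the rank-one pieces; one shows by induction (or directly) using a Cauchy--Schwarz / concavity argument for operator-convex functions that $\sum_k X_k^{\shalf}\kron Y_k^{\shalf} \preceq \big(\sum_k X_k\big)^{\shalf}\kron\big(\sum_k Y_k\big)^{\shalf}$ \emph{provided} the pieces interlock correctly — in fact this is exactly the superadditivity statement: the map $(X,Y)\mapsto X^{\shalf}\kron Y^{\shalf}$ restricted to the commuting lifts is jointly concave, so $\sum_k X_k^{\shalf}\kron Y_k^{\shalf}$ compares to the geometric mean of $\sum X_k$ and $\sum Y_k$ after accounting for the number of terms — and this is where the factor of $r$ on the left-hand side of the lemma comes from. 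Concretely, I expect the clean path is: first establish the two-term identity and the commuting reduction as above, then prove the superadditivity $\sum_k X_k^{\shalf}\kron Y_k^{\shalf} \preceq N^{\shalf}\,(\sum_k X_k)^{\shalf}\kron(\sum_k Y_k)^{\shalf}$ is \emph{not} quite what's needed; rather, one uses that $\sum_k X_k^{\shalf}\kron Y_k^{\shalf} \preceq (\sum_k X_k)^{\shalf}\kron(\sum_k Y_k)^{\shalf}$ holds directly via \cref{lem:geomean} applied once to the full commuting families $\{X_k\kron I_n\}_k$ and $\{I_m\kron Y_k\}_k$ with weights — but \cref{lem:geomean} as stated handles a product of powers, not a sum, so the genuinely correct approach is the Ando-style argument: diagonalize, reduce to the scalar inequality $\sum_k x_k^{1/2} y_k^{1/2} \le (\sum_k x_k)^{1/2}(\sum_k y_k)^{1/2}$ (Cauchy--Schwarz!), and lift back. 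This scalar Cauchy--Schwarz, applied coordinatewise after simultaneous diagonalization of $L\kron I_n$ and $I_m\kron R$, is what makes the whole thing work, and the rank bound $r$ enters through the earlier $(\sum_{j=1}^r x_j)(\sum x_j)\tr \preceq r\sum x_j x_j\tr$ step. I would structure the final writeup around: (1) SVD expansion of $g_t$ and the rank-$r$ Cauchy--Schwarz reduction; (2) the commuting lift $X\kron I \leftrightarrow I\kron Y$; (3) simultaneous diagonalization and the scalar Cauchy--Schwarz; (4) reassembly, handling $\eps$ at the end via operator monotonicity of $t\mapsto t^{1/2}$.
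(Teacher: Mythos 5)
Your opening step (SVD expansion of $g_t$ plus the rank-$r$ Cauchy--Schwarz bound $\bigl(\sum_j w_j\bigr)\bigl(\sum_j w_j\bigr)\tr \preceq r\sum_j w_j^{} w_j\tr$) matches the paper's \cref{lem:kron-base}, but after that you take a substantially harder road and the decisive step is not actually proved. What you need at the crux is the superadditivity of the operator geometric mean, $\sum_k X_k^{\shalf}\kron Y_k^{\shalf} \preceq \bigl(\sum_k X_k\bigr)^{\shalf}\kron\bigl(\sum_k Y_k\bigr)^{\shalf}$, which is a consequence of Ando's \emph{joint concavity} of the geometric mean --- a strictly stronger tool than \cref{lem:geomean}, which the paper deliberately uses only in its commuting, monotonicity form. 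Your proposed justification, ``simultaneously diagonalize $L\kron I_n$ and $I_m\kron R$ and apply the scalar Cauchy--Schwarz coordinatewise,'' does not work: the individual summands $\sigma_{t,j}^2(u_{t,j}^{}u_{t,j}\tr)\kron(v_{t,j}^{}v_{t,j}\tr)$ are rank-one in directions unrelated to the eigenbases of $L$ and $R$, so they are not diagonal (nor even commuting) in that common basis, and no coordinatewise scalar inequality is available. Your handling of the $\eps$ term has the same circularity: splitting off $\eps I_{mn}$ and re-adding it at the end requires comparing a \emph{sum} of two Kronecker square roots against a single one, which is again superadditivity; the ``$\eps\le 1$'' caveat is also spurious, as no such restriction appears in the statement.

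The missing idea that makes the lemma easy is a deliberate \emph{relaxation} before any geometric-mean argument: inside \cref{lem:kron-base} one bounds $v_i^{}v_i\tr \preceq I_n$ (resp.\ $u_i^{}u_i\tr \preceq I_m$) to obtain the two one-sided bounds $\frac1r g^{}g\tr \preceq (GG\tr)\kron I_n$ and $\frac1r g^{}g\tr \preceq I_m\kron(G\tr G)$. Summing over $t$ and absorbing $\eps I_{mn}=\eps I_m\kron I_n$ gives, for $M \eqdef \eps I_{mn}+\frac1r\sum_t g_t^{}g_t\tr$, the two relations $M\preceq A_m\kron I_n$ and $M\preceq I_m\kron B_n$ with $A_m=\eps I_m+\sum_t G_t^{}G_t\tr$, $B_n=\eps I_n+\sum_t G_t\tr G_t^{}$. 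Since $A_m\kron I_n$ and $I_m\kron B_n$ commute (and $M$ trivially commutes with itself), a single application of \cref{lem:geomean} with exponents $\thalf,\thalf$ yields $M=M^{\shalf}M^{\shalf}\preceq (I_m\kron B_n)^{\shalf}(A_m\kron I_n)^{\shalf}=A_m^{\shalf}\kron B_n^{\shalf}$, finishing the proof with only the commuting-case lemma. Your route could be salvaged by importing the full joint-concavity/superadditivity theorem from \citet{ando2004geometric}, but as written the proposal has a genuine gap at exactly that point.
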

In particular, the lemma shows that the small eigenvalues of the full-matrix preconditioner
on the left, which are the most important for effective preconditioning, do
not vanish as a result of the implicit approximation.
In order to prove \cref{thm:moo-lower} we need the following technical
result.
\begin{lemma} \label{lem:kron-base}
Let $G$ be an $m \times n$ matrix of rank at most $r$ and denote $g =
\vec(G)$. Then,
\begin{align*}
\frac{1}{r} gg\tr
\preceq
I_{m} \kron (G\tr G)
\quad\text{and}\quad
\frac{1}{r} gg\tr
\preceq
(GG\tr) \kron I_{n}
~ .
\end{align*}
\end{lemma}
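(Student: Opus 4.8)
\textbf{Proof plan for \cref{lem:kron-base}.}
The plan is to expand $G$ in a singular value decomposition and push everything through the Kronecker calculus of \cref{lem:kron-props}. Since $\rank(G)\le r$, write $G=\sum_{k=1}^{r}\sigma_k u_k v_k\tr$ with orthonormal $u_1,\dots,u_r\in\reals^m$, orthonormal $v_1,\dots,v_r\in\reals^n$, and $\sigma_k\ge 0$ (pad with zero singular values if the rank is strictly less than $r$). By linearity of $\vec(\cdot)$ and the identity $\vec(uv\tr)=u\kron v$ from \cref{lem:kron-props}, this gives $g=\vec(G)=\sum_{k=1}^{r}\sigma_k\,(u_k\kron v_k)$, so $gg\tr$ is the outer product of a sum of (at most) $r$ vectors.

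The first key step is a standard ``Cauchy--Schwarz for sums of rank-one matrices'': for any vectors $w_1,\dots,w_r$ of equal dimension, $\lr{\sum_{k=1}^{r}w_k}\lr{\sum_{k=1}^{r}w_k}\tr\preceq r\sum_{k=1}^{r}w_kw_k\tr$, which one checks by applying $z\tr(\cdot)z$ to both sides and using $\lr{\sum_k\ip{w_k,z}}^2\le r\sum_k\ip{w_k,z}^2$. Taking $w_k=\sigma_k(u_k\kron v_k)$ yields $gg\tr\preceq r\sum_{k=1}^{r}\sigma_k^2\,(u_k\kron v_k)(u_k\kron v_k)\tr$; this is precisely the step that invokes the rank bound and produces the factor $r$ and nothing larger.

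The second step simplifies each summand via the mixed-product and transpose rules of \cref{lem:kron-props}: $(u_k\kron v_k)(u_k\kron v_k)\tr=(u_ku_k\tr)\kron(v_kv_k\tr)$. Since $u_k$ is a unit vector we have $u_ku_k\tr\preceq I_m$, and $v_kv_k\tr\succeq 0$, so monotonicity of the Kronecker product (\cref{lem:kron-props}) gives $\sigma_k^2(u_ku_k\tr)\kron(v_kv_k\tr)\preceq \sigma_k^2\,I_m\kron(v_kv_k\tr)$. Summing over $k$ and using $\sum_k\sigma_k^2 v_kv_k\tr=G\tr G$ yields $\tfrac1r gg\tr\preceq I_m\kron(G\tr G)$. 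The second inequality is entirely symmetric: bound $v_kv_k\tr\preceq I_n$ instead of $u_ku_k\tr$, and use $\sum_k\sigma_k^2 u_ku_k\tr=GG\tr$.

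I do not expect a genuine obstacle here: the argument is a short sequence of applications of the already-established Kronecker identities. The only points requiring care are bookkeeping with the (row-major) $\vec$ convention when invoking $\vec(uv\tr)=u\kron v$, and ensuring the SVD is treated as having at most $r$ terms so that the constant in the Cauchy--Schwarz step is exactly $r$.
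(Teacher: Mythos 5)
Your proposal is correct and follows essentially the same route as the paper's proof: SVD of $G$, the bound $\lr{\sum_k w_k}\lr{\sum_k w_k}\tr \preceq r\sum_k w_k^{}w_k\tr$, the mixed-product identity, and then bounding $u_k^{}u_k\tr \preceq I_m$ (resp.\ $v_k^{}v_k\tr \preceq I_n$) before summing. No gaps.
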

\begin{proof}
Write the singular value decomposition $G = \sum_{i=1}^r \sigma_i^{} u_i^{} v_i\tr$,
where $\sigma_i \ge 0$ for all $i$, and $u_1,\ldots,u_r \in \reals^{m}$ and
$v_1,\ldots,v_r \in \reals^{n}$ are orthonormal sets of vectors.
Then,
$g = \sum_{i=1}^r \sigma_i (u_i \kron v_i)$
and hence,
\begin{align*}
gg\tr
&=
\LR{\sum_{i=1}^r \sigma_i (u_i \kron v_i)}\LR{\sum_{i=1}^r \sigma_i (u_i \kron v_i)}\tr
.
\end{align*}
Next, we use the fact that for any set of vectors $w_1,\ldots,w_r$,
$$ \LR{\sum_{i=1}^r w_i}\LR{\sum_{i=1}^r w_i}\tr \preceq r \sum_{i=1}^r w_i^{} w_i\tr ~,$$
which holds since given a vector $x$ we can write $\alpha_i = x\tr w_i$, and
use the convexity of $\alpha \mapsto \alpha^2$ to obtain
$$
x\tr \LR{\sum_{i=1}^r w_i}\LR{\sum_{i=1}^r w_i}\tr x
=
\LR{\sum_{i=1}^r \alpha_i}^2
\le
r \sum_{i=1}^r \alpha_i^2
=
r \, x\tr \LR{\sum_{i=1}^r w_i^{} w_i\tr} x ~
.
$$
Using this fact and \cref{lem:kron-props}\ref{it:kron-prod} we can rewrite,
\begin{align*}
gg\tr
&=
\LR{\sum_{i=1}^r \sigma_i (u_i \kron v_i)}\LR{\sum_{i=1}^r \sigma_i (u_i \kron v_i)}\tr \\
&\preceq
r \sum_{i=1}^r \sigma_i^2 (u_i \kron v_i)(u_i \kron v_i)\tr
\\
&=
r \sum_{i=1}^r \sigma_i^2 (u_i^{} u_i\tr) \kron (v_i^{} v_i\tr)
~.
\end{align*}
Now, since
$GG\tr = \sum_{i=1}^r \sigma_i^2 u_i^{} u_i\tr$
and $v_i^{} v_i\tr \preceq I_{n}$ for all $i$, we have
\begin{align*}
\frac{1}{r} gg\tr
\preceq
\sum_{i=1}^r \sigma_i^2 (u_i^{} u_i\tr) \kron I_{n}
=
(GG\tr) \kron I_{n} ~ .
\end{align*}
Similarly, using $G\tr G = \sum_{i=1}^r \sigma_i^2 v_i^{} v_i\tr$ and
$u_i^{} u_i\tr \preceq I_{m}$ for all $i$, we
obtain the second matrix inequality.
\end{proof}

\begin{proof}[Proof of \cref{thm:moo-lower}] 
Let us introduce the following notations to simplify our
derivation,
\begin{align*}
	A_m \eqdef \eps I_{m} + \sum_{t=1}^T G_t^{} G_t\tr & ~ , \qquad
	B_n \eqdef \eps I_{n} + \sum_{t=1}^T G_t\tr G_t^{} ~.
\end{align*}
From \cref{lem:kron-base} we know that,
\begin{align*}
\eps I_{mn} + \frac{1}{r} \sum_{t=1}^T g_t^{} g_t\tr
	&\preceq I_{m} \kron B_n ~~\mbox{ and }~~
\eps I_{mn} + \frac{1}{r} \sum_{t=1}^T g_t^{} g_t\tr
	\preceq A_m \kron I_{n} ~.
\end{align*}
Now, observe that $I_{m} \kron B_n$ and $A_m \kron I_{n}$ commute with each other.
Using \cref{lem:geomean} followed by \cref{lem:kron-props}\ref{it:kron-pow} and \cref{lem:kron-props}\ref{it:kron-prod} yields
\begin{align*}
\eps I_{mn} + \frac{1}{r} \sum_{t=1}^T g_t^{} g_t\tr
&\preceq
	\Lr{ I_{m} \kron B_n}^{\shalf}
	\Lr{ A_m \kron I_{n}}^{\shalf}
\,=\,
	\Lr{ I_{m} \kron B_n^{\shalf}}
	\Lr{ A_m^{\shalf} \kron I_{n}}
\,=\,
A_m^{\shalf} \kron B_n^{\shalf} ~ ,
\end{align*}
which completes the proof.
\end{proof}

We can now prove the main result of the section.

\begin{proof}[Proof of \cref{thm:regret-2d}]
Recall the update performed in~\cref{alg:sham2d},
\begin{align*}
	W_{t+1}^{} ~=~ W_t^{} - \eta L_t^{-\nicefrac{1}{4}} G_t^{} R_t^{-\nicefrac{1}{4}} ~.
\end{align*}
Note that the pair of left and right preconditioning matrices, $\smash{L_t^{\squar}}$ and $\smash{R_t^{\squar}}$, is equivalent due to \cref{lem:vec-kron} to a
single preconditioning matrix $H_t = \smash{L_t^{\squar} \kron R_t^{\squar}}
\in \reals^{mn\times mn}$.  This matrix is applied to flattened version of the
gradient $g_t = \vec(G_t)$. More formally, letting $w_t = \vec(W_t)$ we have
that the update rule of the algorithm is equivalent to,
\begin{align}
	w_{t+1} = w_t - \eta H_t^{-1} g_t^{} ~ . \label{eqn:flat_update}
\end{align}
Hence, we can invoke \cref{lem:regret-md} in conjuction the fact that
$0 \prec H_1 \preceq \ldots \preceq H_T$. The latter follows from \cref{lem:kron-props}\ref{it:kron-monotone}, as $0 \prec L_1 \preceq
\ldots \preceq L_T$ and $0 \prec R_1 \preceq \ldots \preceq R_T$. We thus
further bound the first term of~\cref{lem:regret-md} by,
\begin{align} \label{eq:telescope}
\sum_{t=1}^T (w_t-w^\st)\tr (H_t - H_{t-1}) (w_t-w^\st)
	\le D^2 \sum_{t=1}^T \trace(H_t - H_{t-1}) = D^2 \trace(H_T) ~.
\end{align}
for
$D = \max_{t \in [T]} \norm{w_t-w^\st} =
	\max_{t \in [T]} \norm{W_t-W^\st}_\frob$
where $w^\st = \vec(W^\st)$ and $H_0 = 0$.  We obtain the regret bound
\begin{align} \label{eq:2d-ineq0}
\sum_{t=1}^T f_t(W_t) - \sum_{t=1}^T f_t(W^\st)
\le
\frac{D^2}{2\eta} \trace(H_T)
+ \frac{\eta}{2} \sum_{t=1}^T \Lr{\norm{g_t}_{H_t}^*}^2 ~ .
\end{align}
Let us next bound the sum on the right-hand side of \cref{eq:2d-ineq0}.
First, according to \cref{thm:moo-lower} and the monotonicity (in the operator sense) of the square root function $x \mapsto x^{1/2}$ (recall \cref{lem:monotone}), for the preconditioner $H_t$ we have that
\begin{align} \label{eq:2d-ineq1}
	\wh{H}_t \eqdef \LR{ r \eps I + \sum_{s=1}^t g_s^{} g_s\tr }^{\shalf} 
	\preceq
	\sqrt{r} H_t
	~ .
\end{align}
On the other hand, invoking \cref{lem:adareg} with the choice of potential
$$\Phi(H) = \trace(H) + r\eps\trace(H^{-1})$$ and
$M_t = \sum_{s=1}^t g_t^{} g_t\tr$, we get,
\begin{align*}
\argmin_{H \succ 0} \Lrset{ M_t \bullet H^{-1} + \Phi(H) }
	= \argmin_{H \succ 0} \trace\Lr{ \wh{H}_t^2 H^{-1} + H }
	= \wh{H}_t ~.
\end{align*}
To see the last equality, observe that for any symmetric $A \succeq 0$, the function $\trace(AX+X^{-1})$ is minimized at $X = A^{-\shalf}$, since $\nabla_X \trace(AX+X^{-1}) = A-X^{-2}$.
Hence, \cref{lem:adareg} implies
\begin{align} \label{eq:2d-ineq2}
\sum_{t=1}^T \Lr{\norm{g_t}_{\widehat{H}_t}^*}^2
&\le
\sum_{t=1}^T \Lr{\norm{g_t}_{\widehat{H}_T}^*}^2 + \Phi(\wh{H}_T) - \Phi(\wh{H}_0)
\notag\\
&\le
\LR{ r \eps I + \sum_{t=1}^T g_t g_t\tr } \bullet \wh{H}_T^{-1} + \trace(\wh{H}_T)
\\
&=
2\trace(\wh{H}_T) ~ .  \notag
\end{align}
Using \cref{eq:2d-ineq1} twice along with \cref{eq:2d-ineq2}, we obtain
\begin{align*}
\sum_{t=1}^T (\norm{g_t}_{H_t}^*)^2
\le
\sqrt{r} \sum_{t=1}^T (\norm{g_t}_{\widehat{H}_t}^*)^2
\le
2\sqrt{r} \trace(\wh{H}_T)
\le
2r \trace(H_T) ~ .
\end{align*}
Finally, using the above upper bound in \cref{eq:2d-ineq0} and choosing $\eta = D/\sqrt{2r}$ gives the
desired regret bound:
\begin{align*}
\sum_{t=1}^T f_t(W_t) - \sum_{t=1}^T f_t(W^\st)
\le
\LR{\frac{D^2}{2\eta} + \eta r} \trace(H_T)
=
\sqrt{2r} D \trace(L_T^\squar) \trace(R_T^\squar) ~ .
\end{align*}
\end{proof}

\section{\NAME for tensors}

In this section we introduce the \NAME algorithm in its general form, which
is applicable to tensors of arbitrary dimension.  Before we can
present the algorithm, we review further definitions and operations involving
tensors.

\subsection{Tensors: notation and definitions}

A tensor is a multidimensional array.  The \emph{order} of a tensor is the
number of dimensions (also called modes).  For an order-$k$ tensor $A$ of
dimension $n_1 \times \cdots \times n_k$, we use the notation
$A_{j_1,\ldots,j_k}$ to refer to the single element at position $j_i$ on the
$i$'th dimension for all $i$ where $1 \le j_i \le n_i$.
We also denote
$$n = \prod_{i=1}^k n_i ~~ \mbox{ and } ~~ \forall i: \, n_{-i} = \prod_{j \ne i} n_j ~ .$$
The following definitions are used throughout the section.
\begin{itemize}
\item
A \emph{slice} of an order-$k$ tensor along its $i$'th dimension is a tensor of order $k-1$
which consists of entries with the same index on the $i$'th dimension. A slice generalizes the
notion of rows and columns of a matrix.

\item
An $n_1 \times \cdots \times n_k$ tensor $A$ is of \emph{rank one} if it can be written as an outer product of
$k$ vectors of appropriate dimensions. Formally, let $\out$ denote the vector outer product and
and set $A = u^1 \out u^2 \out \cdots \out u^k$ where $u^i \in \reals^{n_i}$ for all $i$. Then $A$ is an
order-$k$ tensor defined through
\begin{align*}
A_{j_1, \ldots, j_k}
&=
(u^1 \out u^2 \out \cdots \out u^k)_{j_1, \ldots, j_k}
\\
&=
u^1_{j_1} u^2_{j_2} \cdots u^k_{j_k}
,
\qquad
\forall ~
1 \le j_i \le n_i ~ (i \in [k])~
.
\end{align*}

\item
The \emph{vectorization} operator flattens a tensor to a column vector in $\reals^{n}$, generalizing the matrix $\vec$ operator.
For an $n_1 \times \cdots \times n_k$ tensor $A$ with slices $A^1_1,\ldots,A^1_{n_1}$ along its first dimension, this operation can be defined recursively as follows:
\begin{align*}
\vec(A) = \Lr{ \begin{matrix} \vec(A^1_1)\tr & \cdots & \vec(A^1_{n_1})\tr \end{matrix} }\tr
,
\end{align*}
where for the base case ($k=1$), we define $\vec(u) = u$ for any column vector $u$.

\item
The \emph{matricization} operator $\mat{i}{A}$ reshapes a tensor $A$ to a matrix by vectorizing the slices of $A$ along the $i$'th dimension and stacking them as rows of a matrix.
More formally, for an $n_1 \times \cdots \times n_k$ tensor $A$ with slices $A^i_1,\ldots,A^i_{n_i}$ along the $i$'th dimension, matricization is defined as the $n_i \times n_{-i}$ matrix,
\begin{align*}
\mat{i}{A}
=
\Lr{ \begin{matrix} \vec(A^i_1) & \cdots & \vec(A^i_{n_i}) \end{matrix} }\tr ~
.
\end{align*}

\item
The matrix product of an $n_1 \times \cdots \times n_k$ tensor $A$ with an $m \times n_i$ matrix $M$ is defined as the $n_1 \times \cdots \times n_{i-1} \times m \times n_{i+1} \times
\cdots \times n_k$ tensor, denoted $A\times_i M$, for which the identity
$\mat{i}{A\times_i M} = M \mat{i}{A}$ holds.
Explicitly, we define $A\times_i M$ element-wise as
\begin{align*}
(A \times_i M)_{j_1,\ldots,j_k}
=
\sum_{s=1}^{n_i} M_{j_i s} A_{j_1, \ldots j_{i-1}, s, j_{i+1}, \ldots, j_k}.
\end{align*}
A useful fact, that follows directly from this definition, is that the tensor-matrix product is commutative, in the sense that $A \times_i M \times_{i'} M' = A \times_{i'} M' \times_{i} M$ for any $i \neq i'$ and matrices $M \in \reals^{n_i \times n_i}$, $M' \in \reals^{n_{i'} \times n_{i'}}$.

\item
The \emph{contraction} of an $n_1 \times \cdots \times n_k$ tensor $A$ with itself along all but the $i$'th dimension is an $n_i
\times n_i$ matrix defined as $A^{(i)} = \mat{i}{A} \mat{i}{A}\tr$, or
more explicitly as
\begin{align*}
A^{(i)}_{j,j'}
=
\sum_{\alpha_{-i}} A_{j,\alpha_{-i}} A_{j',\alpha_{-i}}
\qquad
\forall ~ 1 \le j,j' \le n_i
,
\end{align*}
where the sum ranges over all possible indexings $\alpha_{-i}$ of all dimensions $\ne i$.
\end{itemize}

\subsection{The algorithm}

We can now describe the \NAME algorithm in the general, order-$k$ tensor
case, using the definitions established above.  Here we assume that the
optimization domain is $\cW = \reals^{n_1 \times \cdots \times n_k}$, that
is, the vector space of order-$k$ tensors, and the functions
$f_1,\ldots,f_T$ are convex over this domain.  In particular, the gradient
$\nabla f_t$ is also an $n_1 \times \cdots \times n_k$ tensor.

The \NAME algorithm in its general form, presented in \cref{alg:kd}, is
analogous to \cref{alg:sham2d}. It maintains a separate preconditioning
matrix $H_t^i$ (of size $n_i \times n_i$) corresponding to for each
dimension $i \in [k]$ of the gradient.  On step $t$, the $i$'th mode of
the gradient $G_t$ is then multiplied by the matrix
$(H^{i}_t)^{-\nicefrac{1}{2k}}$ through the tensor-matrix product operator
$\times_i$.  (Recall that the order in which the multiplications are carried
out does not affect the end result and can be arbitrary.) After all
dimensions have been processed and the preconditioned gradient $\wt{G}_t$
has been obtained, a gradient step is taken.

\begin{algorithm}[t]
\wrapalgo[0.65\textwidth]{
  \begin{algorithmic}\itemindent=-10pt
  \STATE Initialize: $W_1 = \bm{0}_{n_1\times\cdots\times n_k}$ ;~
   $\forall i\in[k]: \, H^{i}_0 = \epsilon I_{n_i}$
    \FOR{$t = 1,\ldots,T$}
    \STATE Receive loss function $f_t : \reals^{n_1\times\cdots\times n_k} \mapsto \reals$
    \STATE Compute gradient $G_t = \nabla f_t(W_t)$
    	\COMMENT{$G_t \in \reals^{n_1\times\cdots\times n_k}$}
    \STATE $\wt{G}_t \gets G_t$
    	\COMMENT{$\wt{G}_t$ is preconditioned gradient}
    \FOR{$i = 1,\ldots,k$}
    \STATE $H^{i}_t = H^{i}_{t-1} + G_t^{(i)}$
    \STATE $\wt{G}_t \gets \wt{G}_t \times_i (H^{i}_t)^{-\nicefrac{1}{2k}}$
    \ENDFOR
    \STATE Update: $W_{t+1} = W_t - \eta \wt{G}_t$
    \ENDFOR
  \end{algorithmic}
}
\caption{\NAME, general tensor case.}
\label{alg:kd}
\end{algorithm}

The tensor operations $A^{(i)}$ and $M \times_i A$ can be implemented using tensor contraction,
which is a standard library function in scientific computing libraries such as Python's NumPy, and is fully supported by modern machine learning frameworks such as TensorFlow \citep{tensorflow}.
See \cref{sec:impl} for further details on our implementation of the algorithm in the TensorFlow environment.

We now state the main result of this section.

\begin{theorem} \label{thm:sham-kd}
Assume that for all $i \in [k]$ and $t=1,\ldots,T$ it holds that $\rank(\mat{i}{G_t}) \le r_i$, and let $r = (\prod_{i=1}^{k} r_i)^{\nicefrac{1}{k}}$.
Then the regret of \cref{alg:kd} compared to any $W^\st \in \reals^{n_1 \times \cdots \times n_k}$ is
\begin{align*}
\sum_{t=1}^T f_t(W_t) - \sum_{t=1}^T f_t(W^\st)
\le
\sqrt{2r}D \prod_{i=1}^k \trace\Lr{(H_T^{i})^{\nicefrac{1}{2k}}}
,
\end{align*}
where $H_T^i = \eps I_{n_i} + \sum_{t=1}^T G_t^{(i)}$ for all $i \in [k]$ and
 $D = \max_{t \in [T]} \norm{W_t-W^\st}_\frob$.
\end{theorem}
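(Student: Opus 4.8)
The plan is to mirror the proof of \cref{thm:regret-2d}, lifting each of its ingredients to order-$k$ tensors. First I would pass to the flattened picture: set $w_t = \vec(W_t)$, $g_t = \vec(G_t)$, and $H_t = \bigkron_{i=1}^k (H^i_t)^{\nicefrac{1}{2k}} \in \reals^{n \times n}$. The generalization of \cref{lem:vec-kron} that I need is the identity $\vec(A \times_1 M_1 \times_2 \cdots \times_k M_k) = (M_1 \kron \cdots \kron M_k)\vec(A)$ for symmetric $M_i$, which follows by induction on $k$ by applying the matrix identity one mode at a time. Since $\bigkron_{i=1}^k (H^i_t)^{-\nicefrac{1}{2k}} = H_t^{-1}$ by \cref{lem:kron-props}\ref{it:kron-pow}, this shows that the update of \cref{alg:kd} is exactly $w_{t+1} = w_t - \eta H_t^{-1} g_t$, as in \cref{eqn:flat_update}. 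Each $H^i_t$ is positive definite and nondecreasing in $t$, hence $(H^i_t)^{\nicefrac{1}{2k}}$ is nondecreasing by \cref{lem:monotone} and $0 \prec H_1 \preceq \cdots \preceq H_T$ by \cref{lem:kron-props}\ref{it:kron-monotone}. Plugging into \cref{lem:regret-md} and telescoping as in \cref{eq:telescope} (setting $H_0 \eqdef 0$), the regret of \cref{alg:kd} is at most $\tfrac{D^2}{2\eta}\trace(H_T) + \tfrac{\eta}{2}\sum_{t=1}^T(\norm{g_t}^*_{H_t})^2$, the analogue of \cref{eq:2d-ineq0}.

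The crux is the tensor analogue of \cref{thm:moo-lower}: for any $\eps \ge 0$,
\begin{align*}
\eps I_n + \frac{1}{r}\sum_{t=1}^T g_t^{} g_t\tr ~\preceq~ \bigkron_{i=1}^k (H^i_T)^{\nicefrac{1}{k}}, \qquad r = \big(\textstyle\prod_{i=1}^k r_i\big)^{\nicefrac{1}{k}}.
\end{align*}
I would establish this in two steps. The base case is the tensor version of \cref{lem:kron-base}: for a single tensor $G$ with $g = \vec(G)$ and each $i \in [k]$, $\tfrac{1}{r_i} gg\tr \preceq I_{n_1} \kron \cdots \kron G^{(i)} \kron \cdots \kron I_{n_k}$, with the contraction $G^{(i)} = \mat{i}{G}\mat{i}{G}\tr$ in the $i$-th factor. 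This reduces to the matrix statement applied to $\mat{i}{G}$, which has rank at most $r_i$: the vector $\vec(\mat{i}{G})$ is a coordinate permutation of $g$, and conjugating $G^{(i)} \kron I_{n_{-i}}$ by the permutation that reorders the tensor modes places the factor $G^{(i)}$ into mode $i$, so \cref{lem:kron-base} applied to $\mat{i}{G}$ gives the claim. Summing over $t$ and adding $\eps I_n = I_{n_1} \kron \cdots \kron \eps I_{n_i} \kron \cdots \kron I_{n_k}$ yields $\eps I_n + \tfrac{1}{r_i}\sum_t g_t g_t\tr \preceq C_i$, where $C_i = I_{n_1} \kron \cdots \kron H^i_T \kron \cdots \kron I_{n_k}$. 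The matrices $C_1,\ldots,C_k$ pairwise commute, and so do the matrices $\eps I_n + \tfrac{1}{r_i}\sum_t g_t g_t\tr$ (all are functions of $S \eqdef \sum_t g_t g_t\tr$), so \cref{lem:geomean} with weights $\alpha_i = 1/k$ gives $\prod_{i=1}^k (\eps I_n + \tfrac{1}{r_i} S)^{\nicefrac{1}{k}} \preceq \prod_{i=1}^k C_i^{\nicefrac{1}{k}} = \bigkron_{i=1}^k (H^i_T)^{\nicefrac{1}{k}}$, the last equality by \cref{lem:kron-props}\ref{it:kron-pow} and \ref{it:kron-prod}. It remains to check $\eps I_n + \tfrac{1}{r} S \preceq \prod_{i=1}^k (\eps I_n + \tfrac{1}{r_i} S)^{\nicefrac{1}{k}}$; since both sides are functions of $S$, this is equivalent to the scalar inequality $\eps + \tfrac{\lambda}{r} \le \prod_{i=1}^k(\eps + \tfrac{\lambda}{r_i})^{\nicefrac{1}{k}}$ for $\lambda \ge 0$, which is superadditivity of the geometric mean applied to the vectors $(\eps,\ldots,\eps)$ and $(\lambda/r_1,\ldots,\lambda/r_k)$.

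With this lemma in hand the remainder parallels the proof of \cref{thm:regret-2d} essentially line for line. Since $\bigkron_i (H^i_t)^{\nicefrac{1}{k}} = H_t^2$ by \cref{lem:kron-props}\ref{it:kron-prod}, operator monotonicity of $x \mapsto x^{\nicefrac{1}{2}}$ (\cref{lem:monotone}) turns the partial-sum version of the key lemma into $\wh{H}_t \eqdef \big(r\eps I_n + \sum_{s=1}^t g_s g_s\tr\big)^{\nicefrac{1}{2}} \preceq \sqrt{r}\, H_t$, the analogue of \cref{eq:2d-ineq1}. Invoking \cref{lem:adareg} with $\Phi(H) = \trace(H) + r\eps\trace(H^{-1})$ and $M_t = \sum_{s=1}^t g_s g_s\tr$ — whose induced regularizer is $\argmin_{H \succ 0}\{M_t \bullet H^{-1} + \Phi(H)\} = \wh H_t$ — gives $\sum_t (\norm{g_t}^*_{\wh H_t})^2 \le 2\trace(\wh H_T)$ exactly as in \cref{eq:2d-ineq2}, and combining with $\wh H_t \preceq \sqrt r H_t$ yields $\sum_t (\norm{g_t}^*_{H_t})^2 \le 2r\trace(H_T)$. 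Substituting into the regret estimate from the first paragraph and optimizing over $\eta = D/\sqrt{2r}$ gives $\sum_t f_t(W_t) - \sum_t f_t(W^\st) \le \sqrt{2r}\,D\,\trace(H_T)$, and the theorem follows from multiplicativity of the trace over Kronecker products, $\trace(H_T) = \trace\big(\bigkron_i (H^i_T)^{\nicefrac{1}{2k}}\big) = \prod_i \trace\big((H^i_T)^{\nicefrac{1}{2k}}\big)$. I expect the main obstacle to be the base-case lemma together with its aggregation: keeping the mode-reordering permutation straight, and — because the ranks $r_i$ need not agree — funneling the $k$ per-mode inequalities through Ando's geometric-mean inequality, which is precisely what forces the geometric-mean definition of $r$ and the appeal to superadditivity of the geometric mean.
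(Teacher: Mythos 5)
Your proposal is correct, and its overall architecture is the same as the paper's: flatten via the tensor $\vec$/Kronecker identity (\cref{lem:vec-kron-tensor}) so that \cref{alg:kd} becomes preconditioned mirror descent with $H_t=\bigkron_i (H^i_t)^{\nicefrac{1}{2k}}$, telescope the first term of \cref{lem:regret-md} using monotonicity of the $H_t$, control the dual-norm sum via \cref{lem:adareg} with the potential $\trace(H)+r\eps\trace(H^{-1})$, and drive everything with the tensor analogue of \cref{thm:moo-lower} plus \cref{lem:monotone}, $\eta=D/\sqrt{2r}$, and multiplicativity of the trace over Kronecker products. The only place you genuinely deviate is inside the key lemma (the paper's \cref{thm:moo-tensor}): where the paper proves \cref{lem:transpose-tensor} by an explicit rank-one/quadratic-form computation to move the single-mode bound of \cref{lem:kron-base} into the $i$'th Kronecker slot, you instead observe that $\vec(\mat{i}{G})$ is a fixed coordinate permutation of $\vec(G)$ and that conjugating $G^{(i)}\kron I_{n_{-i}}$ by this perfect-shuffle permutation produces $I\kron\cdots\kron G^{(i)}\kron\cdots\kron I$ --- a slicker packaging of the same fact. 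Second, you keep $\eps$ unscaled in each per-mode inequality and then reconcile the differing ranks through an extra scalar step (superadditivity of the geometric mean), which actually yields a marginally stronger conclusion, $\eps I_n+\tfrac{1}{r}\sum_t g_t^{}g_t\tr \preceq \bigkron_i (H^i_T)^{\nicefrac{1}{k}}$, than the paper's statement; the paper sidesteps this by simply multiplying each mode-$i$ inequality by $r_i\ge 1$ before invoking \cref{lem:geomean}, which is a hair more elementary but gives only $\eps I_n+\sum_t g_t^{}g_t\tr \preceq r\bigkron_i(H^i_T)^{\nicefrac{1}{k}}$. Either version feeds into the same bound $\wh{H}_t\preceq\sqrt{r}\,H_t$ and hence the same regret bound, so the difference is cosmetic downstream.
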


The comments following \cref{thm:regret-2d} regarding the parameter $D$ in the above bound and the lack of projections in the algorithm are also applicable in the general tensor version.
Furthermore, as in the matrix case, under standard assumptions each of the trace terms on the right-hand side of the above bound is bounded by $\O(T^{\nicefrac{1}{2k}})$. Therefore, their product, and thereby the overall regret bound, is $\O(\sqrt{T})$.

\subsection{Analysis}

We turn to proving \cref{thm:sham-kd}.
For the proof, we require the following generalizations of \cref{thm:moo-lower,lem:vec-kron} to tensors of arbitrary order.

\begin{lemma} \label{thm:moo-tensor}
Assume that $G_1,\ldots,G_T$ are all order $k$ tensors of dimension $n_1 \times \cdots \times n_k$, and let $n = n_1 \cdots n_k$ and $g_t = \vec(G_t)$ for all $t$.
Let $r_i$ denote the bound on the rank of the $i^{\footnotesize \mbox{th}}$ matricization of  $G_1,\ldots,G_T$, namely,
$\rank(\mat{i}{G_t}) \le r_i$ for all $t$ and $i \in [k]$. Denote $r = (\prod_{i=1}^{k} r_i)^{\nicefrac{1}{k}}$.
Then, for any $\eps \ge 0$ it holds that
\begin{align*}
\eps I_{n} + \sum_{t=1}^T g_t g_t\tr
~\preceq~
r \, \bigkron_{i=1}^k \LR{\eps I_{n_i} + \sum_{t=1}^T G_t^{(i)}}^{\nicefrac{1}{k}} ~ .
\end{align*}
\end{lemma}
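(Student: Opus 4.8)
The plan is to follow the three-step template of the proof of \cref{thm:moo-lower} (the $k=2$ case): a mode-wise lower bound generalizing \cref{lem:kron-base}, then accumulation over $t$ with absorption of the $\eps$-term, and finally one application of the geometric-mean inequality \cref{lem:geomean}. It is convenient to abbreviate, for an $n_i\times n_i$ matrix $X$, $K_i(X)\eqdef I_{n_1}\kron\cdots\kron I_{n_{i-1}}\kron X\kron I_{n_{i+1}}\kron\cdots\kron I_{n_k}$, and to set $H_i\eqdef\eps I_{n_i}+\sum_{t=1}^T G_t^{(i)}$ (so $H_i$ equals the $H_T^i$ of the theorem). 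Throughout we assume $r_i\ge1$ for all $i$, the only case of interest.

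\emph{Step 1 (a mode-wise analogue of \cref{lem:kron-base}).} I would first show that for a single order-$k$ tensor $G$ with $\rank(\mat{i}{G})\le r_i$ and $g=\vec(G)$, one has $\frac1{r_i}gg\tr\preceq K_i(G^{(i)})$ for every $i\in[k]$. Take the SVD $\mat{i}{G}=\sum_{l=1}^{r_i}\sigma_l u_l w_l\tr$ with $u_l\in\reals^{n_i}$, $w_l\in\reals^{n_{-i}}$, and $\{u_l\}$, $\{w_l\}$ orthonormal; applying $\vec$ and \cref{lem:kron-props}\ref{it:vec-kron} gives $\vec(\mat{i}{G})=\sum_l\sigma_l(u_l\kron w_l)$, and since $\vec(G)$ and $\vec(\mat{i}{G})$ list the same entries of $G$ in different orders, $g=P_i\sum_l\sigma_l(u_l\kron w_l)$ for a permutation matrix $P_i$ depending only on $(n_1,\dots,n_k)$ and $i$ (with $P_1=I$). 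Then, exactly as in the proof of \cref{lem:kron-base}, the convexity bound $(\sum_l v_l)(\sum_l v_l)\tr\preceq r_i\sum_l v_l v_l\tr$, bilinearity of $\kron$, the inequality $w_l w_l\tr\preceq I_{n_{-i}}$, and \cref{lem:kron-props}\ref{it:kron-monotone} give
\[
gg\tr\;\preceq\;r_i\,P_i\Bigl(\bigl(\textstyle\sum_l\sigma_l^2 u_l u_l\tr\bigr)\kron I_{n_{-i}}\Bigr)P_i\tr\;=\;r_i\,P_i\bigl(G^{(i)}\kron I_{n_{-i}}\bigr)P_i\tr ,
\]
using $G^{(i)}=\sum_l\sigma_l^2 u_l u_l\tr$. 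It then remains to verify that conjugation by $P_i$ turns $G^{(i)}\kron I_{n_{-i}}$ into $K_i(G^{(i)})$, i.e.\ that $P_i$ is precisely the shuffle permutation relocating the single $n_i$-block from the front to slot $i$ inside the $k$-fold tensor product. I expect this bookkeeping to be the main obstacle: it is the one genuinely new ingredient over the matrix case, where there is no permutation to track.

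\emph{Step 2 (accumulate and absorb $\eps$).} Summing the bound of Step 1 over $t=1,\dots,T$ and using linearity of $K_i$ gives $\sum_t g_t g_t\tr\preceq r_i\,K_i(\sum_t G_t^{(i)})$. Combining this with $\eps I_n\preceq r_i\eps I_n$ (as $r_i\ge1$, $\eps\ge0$) and the identity $r_i K_i(H_i)=r_i\eps I_n+r_i K_i(\sum_t G_t^{(i)})$, we obtain for every $i\in[k]$
\[
\eps I_n+\sum_{t=1}^T g_t g_t\tr\;\preceq\;r_i\,K_i(H_i) .
\]

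\emph{Step 3 (geometric mean).} The matrices $K_1(H_1),\dots,K_k(H_k)$ are PSD and pairwise commute: by the mixed-product identity \cref{lem:kron-props}\ref{it:kron-prod}, in either order their product has $H_i$ in slot $i$ and $H_{i'}$ in slot $i'$, with identities elsewhere. Apply \cref{lem:geomean} with $X_i\eqdef\eps I_n+\sum_t g_t g_t\tr$ for all $i$ (trivially commuting, and PSD since $\eps\ge0$), $Y_i\eqdef r_i K_i(H_i)$, and weights $\alpha_i=1/k$; Step 2 supplies $X_i\preceq Y_i$, so
\[
\eps I_n+\sum_{t=1}^T g_t g_t\tr\;=\;\prod_{i=1}^k X_i^{1/k}\;\preceq\;\prod_{i=1}^k\bigl(r_i K_i(H_i)\bigr)^{1/k}\;=\;\Bigl(\prod_{i=1}^k r_i\Bigr)^{1/k}\prod_{i=1}^k K_i\bigl(H_i^{1/k}\bigr) ,
\]
where \cref{lem:kron-props}\ref{it:kron-pow} was used to pull the power $1/k$ inside each Kronecker factor (with $I^{1/k}=I$). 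Finally $\prod_{i=1}^k K_i(H_i^{1/k})=\bigkron_{i=1}^k H_i^{1/k}$ by $k-1$ applications of \cref{lem:kron-props}\ref{it:kron-prod}, and $(\prod_i r_i)^{1/k}=r$, which is exactly the claimed inequality.
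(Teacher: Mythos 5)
Your proposal is correct and follows essentially the same route as the paper: apply \cref{lem:kron-base} to each matricization $\mat{i}{G_t}$, reposition the resulting $n_i$-block into slot $i$ of the $k$-fold Kronecker product, absorb the $\eps$-term using $r_i \ge 1$, and finish with one application of \cref{lem:geomean}. The only divergence is the repositioning step, which the paper handles via \cref{lem:transpose-tensor} (an entrywise quadratic-form identity) while you handle it by conjugation with the shuffle permutation $P_i$ intertwining $\vec(G)$ and $\vec(\mat{i}{G})$; since conjugation by such a permutation indeed permutes Kronecker factors (check it on elementary tensors and extend by linearity), the ``bookkeeping'' you flag is a standard verification and not a genuine gap.
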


\begin{lemma} \label{lem:vec-kron-tensor}
Let $G$ be an $n_1\times\ldots\times n_k$ dimensional tensor and $M_i$ be an $n_i \times n_i$ for $i \in [k]$ , then
\begin{align*}
\LR{\bigkron_{i=1}^k M_i}\vec(G)
=
\vec(G \times_1 M_1 \times_2 M_2 \ldots \times_k M_k) ~
.
\end{align*}
\end{lemma}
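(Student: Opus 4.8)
The plan is to prove \cref{lem:vec-kron-tensor} by induction on the order $k$. The base case $k=1$ is immediate: then $G$ is a vector, $\vec(G)=G$, $\bigkron_{i=1}^{k}M_i = M_1$, and $(G\times_1 M_1)_{j} = \sum_s (M_1)_{js}G_s = (M_1 G)_j$, so both sides equal $M_1 G$. (The case $k=2$ recovers \cref{lem:vec-kron}, with $M_2$ playing the role of $R\tr$.) For the inductive step I peel off the first mode, exploiting that the operations $\times_2 M_2,\ldots,\times_k M_k$ act on the slices of $G$ along its first dimension independently, whereas $\times_1 M_1$ recombines these slices linearly in a way that matches the Kronecker factor $M_1 \kron I_{n_{-1}}$.

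Concretely, write $G^1_1,\ldots,G^1_{n_1}$ for the slices of $G$ along the first dimension, each an order-$(k-1)$ tensor of dimension $n_2\times\cdots\times n_k$, and set $\wt{G} \eqdef G \times_2 M_2 \times_3 M_3 \cdots \times_k M_k$; by commutativity of the tensor-matrix product across distinct modes, $G \times_1 M_1 \times_2 M_2 \cdots \times_k M_k = \wt{G} \times_1 M_1$. Unwinding the element-wise definition of $\times_i$ shows that the $j$-th slice of $\wt{G}$ is $\wt{G}^1_j = G^1_j \times_1 M_2 \times_2 M_3 \cdots \times_{k-1} M_k$ (each mode index of a slice is one less than the corresponding mode of $G$), so the induction hypothesis applied to the $k-1$ matrices $M_2,\ldots,M_k$ gives $\vec(\wt{G}^1_j) = \big(\bigkron_{i=2}^k M_i\big)\vec(G^1_j)$ for every $j$; stacking over $j$ and using the recursive definition of $\vec$, this reads $\vec(\wt{G}) = \big(I_{n_1}\kron \bigkron_{i=2}^k M_i\big)\vec(G)$. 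Again from the element-wise definition, the slices of $\wt{G} \times_1 M_1$ along the first mode are $(\wt{G}\times_1 M_1)^1_\ell = \sum_{j=1}^{n_1}(M_1)_{\ell j}\,\wt{G}^1_j$, and applying $\vec$ together with its linearity yields $\vec(\wt{G}\times_1 M_1) = (M_1 \kron I_{n_{-1}})\vec(\wt{G})$. Combining the two identities and collapsing the product of Kronecker products via \cref{lem:kron-props}\ref{it:kron-prod}, namely $(M_1\kron I_{n_{-1}})\big(I_{n_1}\kron \bigkron_{i=2}^k M_i\big) = M_1 \kron \bigkron_{i=2}^k M_i = \bigkron_{i=1}^k M_i$, completes the induction.

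I expect the only real friction to be bookkeeping: tracking the re-indexing of modes when restricting to a slice, and cleanly establishing the two elementary "one-mode" facts — that $\times_i$ with $i\ge 2$ acts slice-wise along mode $1$, and that $\times_1 M_1$ corresponds to left-multiplication by $M_1 \kron I_{n_{-1}}$ after vectorization. Both are purely definitional, but they must be stated precisely for the induction to close. As a shortcut that bypasses the recursion entirely, one can instead chase coordinates: iterating the element-wise definition of $\times_i$ gives $(G \times_1 M_1 \cdots \times_k M_k)_{j_1,\ldots,j_k} = \sum_{s_1,\ldots,s_k}\big(\prod_{i=1}^k (M_i)_{j_i s_i}\big)G_{s_1,\ldots,s_k}$, and this agrees coordinate-by-coordinate with $\big(\bigkron_{i=1}^k M_i\big)\vec(G)$ once one notes that $\vec$ flattens multi-indices lexicographically and that the entry of $\bigkron_{i=1}^k M_i$ in the row indexed by $(j_1,\ldots,j_k)$ and the column indexed by $(s_1,\ldots,s_k)$ (in that ordering) is $\prod_{i=1}^k (M_i)_{j_i s_i}$.
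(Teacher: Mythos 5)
Your proof is correct, and it takes a slightly different route from the paper's. Both arguments proceed by induction on the order $k$, but the paper peels off the \emph{last} mode: it writes $\vec(G)=\vec(\mat{k}{G}\tr)$, applies the matrix identity of \cref{lem:vec-kron} to $\bigl(\bigkron_{i<k}M_i\bigr)\kron M_k$, and then applies the induction hypothesis to the rows of $\mat{k}{G}$, which requires the auxiliary matricization identities $\vec(A)=\vec(\mat{k}{A}\tr)$ and $\mat{i}{A\times_i M}=M\,\mat{i}{A}$ from \cref{lem:vec-mat-tensor}. You instead peel off the \emph{first} mode, which meshes directly with the paper's recursive (row-major) definition of $\vec$: the modes $2,\ldots,k$ act slice-wise on the first-dimension slices (giving the factor $I_{n_1}\kron\bigkron_{i\ge2}M_i$ via the induction hypothesis), while $\times_1 M_1$ recombines slices and corresponds to $M_1\kron I_{n_{-1}}$ after vectorization; the mixed-product property of \cref{lem:kron-props}\ref{it:kron-prod} then collapses the two factors. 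This avoids \cref{lem:vec-mat-tensor} entirely and even lets you start the induction at $k=1$, at the cost of having to state the two ``one-mode'' facts (slice-wise action of $\times_i$ for $i\ge2$, and the block form of $\times_1$), which are definitional but do need the careful index bookkeeping you flag. Your closing coordinate-chasing argument — matching $\prod_{i=1}^k (M_i)_{j_i s_i}$ against the lexicographic flattening used by $\vec$ — is also a valid, induction-free alternative, and arguably the most transparent way to see the identity, though it requires pinning down the ordering conventions explicitly rather than hiding them in the recursive definition of $\vec$ as both inductive proofs do.
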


We defer proofs to \cref{app:tensor}. The proof of our main theorem now readily follows.

\begin{proof}[Proof of \cref{thm:sham-kd}]
The proof is analogous to that of \cref{thm:regret-2d}.
For all $t$, let 
$$H_t = (\bigkron_{i=1}^k H_t^i)^{\nicefrac{1}{2k}} ~~, ~~ g_t = \vec(G_t) ~~ , ~~ w_t = \vec(W_t) ~.$$
Similarly to the order-two (matrix) case, and in light of \cref{lem:vec-kron-tensor}, the update rule of the algorithm is equivalent to
$
w_{t+1} = w_t - \eta H_t^{-1} g_t.
$
The rest of the proof is identical to that of the matrix case, using \cref{thm:moo-tensor} in place of \cref{thm:moo-lower}.
\end{proof}

\section{Implementation details}\label{sec:impl}

We implemented \NAME in its general tensor form in Python as a new
TensorFlow~\citep{tensorflow} optimizer. Our implementation follows almost
verbatim the pseudocode shown in \cref{alg:kd}.  We used the built-in
\texttt{tensordot} operation to implement tensor contractions and
tensor-matrix products. Matrix powers were computed simply by constructing a singular value decomposition (SVD) and then taking the powers of the singular values.  These operations are fully supported in TensorFlow. 
We plan to implement \NAME in PyTorch in the near future.

Our optimizer treats each tensor in the input model as a separate
optimization variable and applies the \NAME update to each of these tensors
independently. This has the advantage of making the optimizer entirely
oblivious to the specifics of the architecture, and it only has to be aware
of the tensors involved and their dimensions.  In terms of
preconditioning, this approach amounts to employing a block-diagonal
preconditioner, with blocks corresponding to the different tensors in the
model. In particular, only intra-tensor correlations are captured and
correlations between parameters in different tensors are ignored entirely.

Our optimizer also implements a diagonal variant of \NAME which is
automatically activated for a dimension of a tensor whenever it is
considered too large for the associated preconditioner to be stored in
memory or to compute its SVD. Other dimensions of the same tensor are not
affected and can still use non-diagonal preconditioning (unless they are too
large themselves). 
See \cref{sec:diagonal} for a detailed description of this variant and its analysis.
In our experiments, we used a threshold of around 1200 for each dimension to trigger the diagonal version with no apparent sacrifice in performance. This
option gives the benefit of working with full preconditioners whenever
possible, while still being able to train models where some of the tensors
are prohibitively large, and without having to modify either the architecture
or the code used for training.

\section{Experimental results}
\label{sec:experiments}

We performed experiments with \NAME on several datasets, using standard deep neural-network models.
We focused on two domains: image classification on CIFAR-10/100, and statistical language modeling on LM1B.
In each experiment, we relied on existing code for training the models, and merely replaced the TensorFlow optimizer without making any other changes to the code.

In all of our experiments, we worked with a mini-batch of size 128.
In \NAME, this simply means that the gradient $G_t$ used in each iteration of the algorithm is the average of the gradient over 128 examples, but otherwise has no effect on the algorithm.
Notice that, in particular, the preconditioners are also updated once per batch using the averaged gradient rather than with gradients over individual examples.

We made two minor heuristic adjustments to \NAME to improve performance.
First, we employed a delayed update for the preconditioners, and recomputed the roots of the matrices $H_t^i$ once in every 20--100 steps.
This had almost no impact on accuracy, but helped to improve the amortized runtime per step.
Second, we incorporated momentum into the gradient step, essentially computing the running average of the gradients $\overline{G}_t = \alpha \overline{G}_{t-1} + (1-\alpha)G_t$ with a fixed setting of $\alpha=0.9$.
This slightly improved the convergence of the algorithm, as is the case with many other first-order stochastic methods.

Quite surprisingly, while the \NAME algorithm performs significantly more computation per step than algorithms like SGD,
AdaGrad, and Adam, its actual runtime in practice is not much worse.
\cref{tab:runtimes} shows the average number of steps (i.e., batches of size 128) per second on a Tesla K40 GPU, for each of the algorithms we tested.
As can be seen from the results, each step of \NAME is typically slower than that of the other algorithms by a small margin, and in some cases (ResNet-55) it is actually faster.

\begin{table}
\centering
\setlength{\tabcolsep}{3pt}
\begin{tabular}{|l|c|c|c|c|}
\hline
Dataset & SGD & Adam & AdaGrad & \NAME\\
\hline
CIFAR10 (ResNet-32) & 2.184 & 2.184 & 2.197 & 2.151\\
CIFAR10 (Inception) &  3.638 & 3.667 & 3.682 & 3.506\\
CIFAR100 (ResNet-55) & 1.210 & 1.203 & 1.210 & 1.249\\
LM1B (Attention) & 4.919 & 4.871 & 4.908 & 3.509\\
\hline
\end{tabular}
\caption{Average number of steps per second (with batch size of 128) in each experiment, for each of the algorithms we tested.}
\label{tab:runtimes}
\end{table}

\subsection{Image Classification}


\begin{figure}[t]
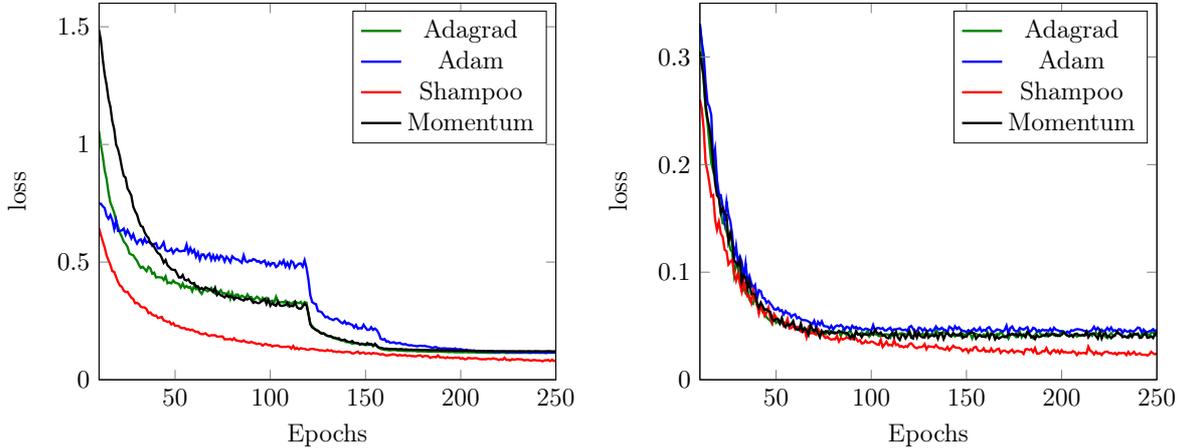

\centering
\resizebox{1.0\textwidth}{!}{
\input{train_TFResnet_loss}
\input{train_BatchNorm_loss}
}
\caption{Training loss for a residual network and an inception network on CIFAR-10.}
\label{fig:cifar10}
\end{figure}


We ran the CIFAR-10 benchmark with several different architectures. 
For each optimization algorithm, we explored 10 different learning rates between 0.01 and 10.0 (scaling the 
entire range for Adam), and chose the one with the best loss and error. 
We show in \cref{fig:cifar10} the training loss for a 32-layer residual network with 2.4M parameters. This
network is capable of reaching an error rate of 5\% on the test set. 
We also ran on the 20-layer small inception network described in \citet{zhang2016understanding},
with 1.65M trainable parameters, capable of reaching an error rate of 7.5\% on test data.

For CIFAR-100 (\cref{fig:cifar100}), we used a 55-layer residual network with 13.5M trainable parameters. In this model, the trainable variables are all
tensors of order $4$ (all layers are convolutional), where the largest layer is of dimension $(256, 3, 3, 256)$.
This architecture does not employ batch-norm, dropout, etc., and was able to reach an error rate of 24\% on the test set.

\begin{figure}[t]
\centering
\resizebox{0.5\textwidth}{!}{
\input{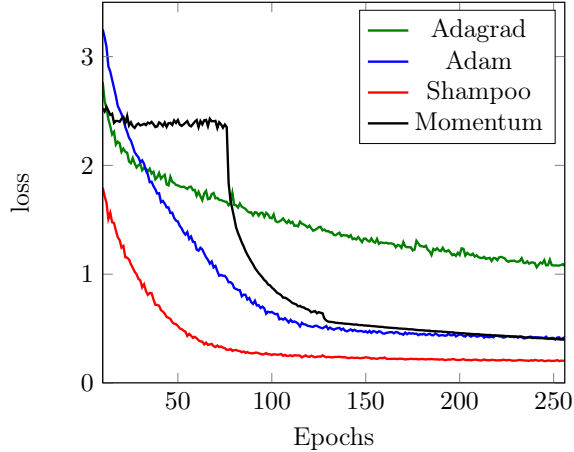}
}
\caption{Training loss for a residual network on CIFAR-100 (without batchnorm).}
\label{fig:cifar100}
\end{figure}

\subsection{Language Models}

Our next experiment was on the LM1B benchmark for statistical language modeling \citep{lm1b}.
We used an Attention model with 9.8M trainable parameters from \cite{vaswani2017attention}. This model has a succession of fully connected-layers, 
with corresponding tensors of order at most $2$, the largest of which is of dimension $(2000, 256)$.
In this experiment, we simply used the default learning rate of $\eta=1.0$ for \NAME. For the other algorithms we explored various different settings of the learning rate.
The graph for the test perplexity is shown in \cref{fig:t2t-neg-log-perplexity}.

\begin{figure}[t!]
\centering
\resizebox{0.5\textwidth}{!}{
\input{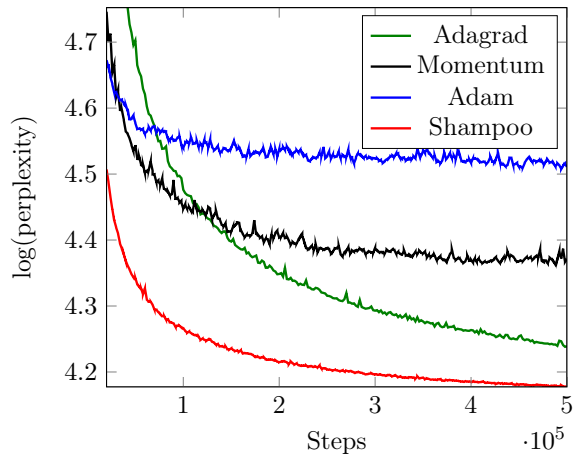}
}
\caption{Test log-perplexity of an Attention model of \citet{vaswani2017attention}.}
\label{fig:t2t-neg-log-perplexity}
\end{figure}

\subsection*{Acknowledgements}

We are grateful to Samy Bengio, Roy Frostig, Phil Long, Aleksander M\k{a}dry and Kunal Talwar for numerous discussions and helpful suggestions.
Special thanks go to Roy Frostig for coming up with the name ``Shampoo.''

\bibliographystyle{abbrvnat}
\bibliography{bib}

\appendix

\section{Diagonal \NAME}
\label{sec:diagonal}
In this section we describe a diagonal version of the \NAME algorithm, in which each of the preconditioning matrices is a diagonal matrix. 
This diagonal variant is particularly useful if one of the dimensions is too large to store the corresponding full preconditioner in memory and to compute powers thereof. 
For simplicity, we describe this variant in the matrix case.
The only change in \cref{alg:sham2d} is replacing the updates of the matrices $L_t$ and $R_t$ with the updates
\begin{align*}
L_t &= L_{t-1} + \diag(G_t^{} G_t\tr) ;\\
R_t &= R_{t-1} + \diag(G_t\tr G_t^{}) .
\end{align*}
Here, $\diag(A)$ is defined as $\diag(A)_{ij} = \ind{i=j} A_{ij}$ for all $i,j$.
See \cref{alg:sham2d-diag} for the resulting pseudocode.
Notice that for implementing the algorithm, one merely needs to store the diagonal elements of the matrices $L_t$ and $R_t$ and maintain $O(m+n)$ numbers is memory.
Each update step could then be implemented in $O(mn)$ time, i.e., in time linear in the number of parameters.

\begin{algorithm}[t!]
\wrapalgo[0.65\textwidth]{
\begin{algorithmic}\itemindent=-10pt
\STATE Initialize $W_1 = \bm{0}_{m \times n} ~;~ L_0 = \epsilon I_m ~;~ R_0 = \epsilon I_n$
\FOR{$t = 1,\ldots,T$}
  \STATE Receive loss function $f_t:\reals^{m\times n}\mapsto\reals$
  \STATE Compute gradient $G_t = \nabla f_t(W_t)$
  \COMMENT{$G_t \in \reals^{m\times n}$}
  \STATE Update preconditioners:
  \vspace*{-2ex}
	\begin{align*}
	L_t &= L_{t-1} + \diag(G_t^{} G_t\tr)\\
	R_t &= R_{t-1} + \diag(G_t\tr G_t^{})
	\end{align*}
  \vspace*{-4ex}
  \STATE Update parameters:
  \vspace*{-1ex}
  $$
    W_{t+1} ~=~ W_t - \eta L_t^{-\nicefrac{1}{4}} G_t^{} R_t^{-\nicefrac{1}{4}}
  $$
  \vspace*{-4ex}
\ENDFOR
\end{algorithmic}
}
\caption{Diagonal version of \NAME, matrix case.}
\label{alg:sham2d-diag}
\end{algorithm}

We note that one may choose to use the full \NAME update for one dimension while employing the diagonal version for the other dimension. (In the more general tensor case, this choice can be made independently for each of the dimensions.)
Focusing for now on the scheme described in \cref{alg:sham2d-diag}, in which both dimensions use a diagonal preconditioner, we can prove the following regret bound.

\begin{theorem} \label{thm:regret-2d-diag}
Assume that the gradients $G_1,\ldots,G_T$ are matrices of rank at most $r$.
Then the regret of \cref{alg:sham2d-diag} compared to any $W^\st \in \reals^{m \times n}$ is bounded as
\begin{align*}
\sum_{t=1}^T f_t(W_t) - \sum_{t=1}^T f_t(W^\st)
\le
\sqrt{2r}D_\infty \trace(L_T^{\squar}) \trace(R_T^{\squar})
~,
\end{align*}
where
$$L_T = \eps I_m + \sum_{t=1}^T \diag(G_t G_t\tr) \;,\;\;
  R_T = \eps I_n + \sum_{t=0}^T \diag(G_t\tr G_t) \;,\;\;
  D_\infty = \max_{t \in [T]} \norm{W_t-W^\st}_\infty ~.$$
(Here, $\norm{A}_\infty$ is the entry-wise $\ell_\infty$ norm of a matrix $A$, i.e., $\norm{A}_\infty \eqdef \norm{\vec(A)}_\infty$.)
\end{theorem}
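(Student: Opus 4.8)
The plan is to follow the proof of \cref{thm:regret-2d} essentially verbatim, replacing \cref{thm:moo-lower} by a diagonal counterpart and exploiting diagonality in one place to obtain the entry‑wise $\ell_\infty$ diameter $D_\infty$ in place of the Frobenius diameter. Write $g_t = \vec(G_t)$, $w_t = \vec(W_t)$, $w^\st = \vec(W^\st)$, and $H_t = L_t^{\squar}\kron R_t^{\squar}$, where now $L_t,R_t$ are the \emph{diagonal} matrices produced by \cref{alg:sham2d-diag}; then $H_t$ is itself diagonal, by \cref{lem:vec-kron} the update of the algorithm is again the flattened preconditioned step $w_{t+1} = w_t - \eta H_t^{-1} g_t$, and $0 \prec H_1 \preceq \cdots \preceq H_T$ by \cref{lem:kron-props}.

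The key new ingredient is the diagonal version of \cref{thm:moo-lower}: for any $\eps \ge 0$, $\;r\eps I_{mn} + \sum_t \diag(g_t^{} g_t\tr) \preceq r\,(L_T^{\shalf}\kron R_T^{\shalf})$, with $L_T,R_T$ as in the theorem statement. Its base step is in fact simpler than \cref{lem:kron-base}: both $\diag(g g\tr)$ and $\diag(GG\tr)\kron I_n$ are diagonal, the $((i,j),(i,j))$ entry of the former being $G_{ij}^2$ and of the latter $\sum_{j'} G_{ij'}^2 \ge G_{ij}^2$, so $\diag(g g\tr) \preceq \diag(GG\tr)\kron I_n$ and, symmetrically, $\diag(g g\tr) \preceq I_m\kron\diag(G\tr G)$ — no rank‑one decomposition and no factor of $r$ are needed here. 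Summing over $t$, adding $\eps I_{mn} = (\eps I_m)\kron I_n = I_m\kron(\eps I_n)$, and using bilinearity of $\kron$ gives $\eps I_{mn} + \sum_t \diag(g_t g_t\tr) \preceq L_T\kron I_n$ and $\preceq I_m\kron R_T$; these two bounding matrices are diagonal, hence commute with each other and with the left side, so \cref{lem:geomean} with weights $\shalf,\shalf$ yields $\eps I_{mn} + \sum_t \diag(g_t g_t\tr) \preceq (L_T\kron I_n)^{\shalf}(I_m\kron R_T)^{\shalf} = L_T^{\shalf}\kron R_T^{\shalf}$ by \cref{lem:kron-props}, and the claimed inequality follows a fortiori since $r \ge 1$.

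Given this, the remainder mirrors the proof of \cref{thm:regret-2d} with two adjustments. In the first term of \cref{lem:regret-md}, each $H_t - H_{t-1}$ is diagonal and PSD, so $x\tr(H_t - H_{t-1})x = \sum_i (H_t - H_{t-1})_{ii}\,x_i^2 \le \norm{x}_\infty^2\,\trace(H_t - H_{t-1})$; summing telescopes to $D_\infty^2\,\trace(H_T)$ with $D_\infty = \max_t\norm{w_t - w^\st}_\infty = \max_t\norm{W_t - W^\st}_\infty$, which is exactly where diagonality buys the $\ell_\infty$ diameter. For the second term, \cref{lem:monotone} applied to the diagonal moo‑lemma gives $\wh H_t \eqdef (r\eps I_{mn} + \sum_{s\le t}\diag(g_s g_s\tr))^{\shalf} \preceq \sqrt r\,H_t$; then \cref{lem:adareg} is invoked with $\Phi(H) = \trace(H) + r\eps\trace(H^{-1})$ restricted to diagonal matrices (and $+\infty$ otherwise), so that, since $M_t\bullet H^{-1} = \diag(M_t)\bullet H^{-1}$ for diagonal $H$, its minimizer is exactly $\wh H_t$, giving $\sum_t (\norm{g_t}_{\wh H_t}^*)^2 \le 2\,\trace(\wh H_T)$ just as in \cref{thm:regret-2d}. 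Combining with $\wh H_t \preceq \sqrt r\,H_t$ yields $\sum_t(\norm{g_t}_{H_t}^*)^2 \le 2r\,\trace(H_T)$, and optimizing $\eta = D_\infty/\sqrt{2r}$ gives $\sqrt{2r}\,D_\infty\,\trace(H_T) = \sqrt{2r}\,D_\infty\,\trace(L_T^{\squar})\,\trace(R_T^{\squar})$. I do not anticipate a real obstacle here; the only point warranting care is that \cref{lem:adareg} still applies once $\Phi$ is confined to the subspace of diagonal matrices — its convexity‑based proof is unaffected there, and the minimizer stays diagonal by the identity above — while the one genuinely new computation, the $\ell_\infty$ refinement of the telescoping term, is short and is precisely what puts $D_\infty$ (rather than the Frobenius diameter) into the bound.
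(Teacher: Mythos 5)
Your proposal is correct and takes essentially the same route as the paper's proof sketch: flatten to the Kronecker preconditioner $H_t = L_t^{\squar}\kron R_t^{\squar}$, use diagonality plus $v\tr M v \le \norm{v}_\infty^2\trace(M)$ to get the $D_\infty$ telescoping term, establish $\wh H_t \preceq \sqrt{r}\,H_t$ for the diagonal preconditioners, and then repeat the argument of \cref{thm:regret-2d}. The only (harmless) deviations are that you prove the diagonal analogue of \cref{thm:moo-lower} directly by entrywise comparison of diagonal matrices — observing in passing that no rank factor is needed at the base step — whereas the paper obtains it by applying $\diag(\cdot)$ to \cref{thm:moo-lower} via $A \preceq B \Rightarrow \diag(A)\preceq\diag(B)$ and $\diag(A\kron B)=\diag(A)\kron\diag(B)$, and that you make explicit the diagonal-restricted potential in the invocation of \cref{lem:adareg}, a detail the paper leaves implicit.
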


\begin{proof}[Proof (sketch).]
For all $t$, denote $H_t = L_t^{\squar} \kron R_t^{\squar}$.
The proof is identical to that of \cref{thm:regret-2d}, with two changes.
First, we can replace \cref{eq:telescope} with
\begin{align*}
\sum_{t=1}^T (w_t-w^\st)\tr (H_t - H_{t-1}) (w_t-w^\st)
	\le D_\infty^2 \sum_{t=1}^T \trace(H_t - H_{t-1}) = D_\infty^2 \trace(H_T) ~,
\end{align*}
where the inequality follows from the fact that for a diagonal PSD matrix $M$ one has $v\tr M v \le \norm{v}_\infty^2 \trace(M)$.
Second, using the facts that $A \preceq B \Rightarrow \diag(A) \preceq \diag(B)$ and $\diag(A \kron B) = \diag(A) \kron \diag(B)$, we can show (from \cref{thm:moo-lower,lem:monotone}) that
\begin{align*}
\wh{H}_t \eqdef 
\diag\LR{r\eps I_{mn} + \sum_{s=1}^t g_t^{} g_t\tr}^{\shalf}
\preceq
\sqrt{r} L_t^{\squar} \kron R_t^{\squar} 
=
\sqrt{r} H_t
~,
\end{align*}
replacing \cref{eq:2d-ineq1}.
Now, proceeding exactly as in the proof of \cref{thm:regret-2d} with \cref{eq:telescope,eq:2d-ineq1} replaced by the above facts leads to the result.
\end{proof}

\section{Tensor case: Technical proofs}
\label{app:tensor}

We prove \cref{thm:moo-tensor,lem:vec-kron-tensor}.  We require several
identities involving the $\vec(\cdot)$ and $\mat{i}{\cdot}$ operations,
bundled in the following lemma.

\begin{lemma} \label{lem:vec-mat-tensor}
For any column vectors $u^1,\ldots,u^k$ and order-$k$ tensor $A$ it holds that:
\begin{enumerate}[label=(\roman*)]
\item
$\vec(u^1 \out \cdots \out u^k) = u^1 \kron \cdots \kron u^k$~ ;
\item
$\mat{i}{u^1 \out \cdots \out u^k} = u^i \big(\bigkron_{i'\ne i} u^{i'} \big)\tr$ ~ ;
\item
$\vec(A) = \vec(\mat{1}{A}) = \vec(\mat{k}{A}\tr)$ ~ ;
\item
$\mat{i}{A\times_i M} = M \mat{i}{A}$~ .
\end{enumerate}
\end{lemma}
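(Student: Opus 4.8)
The plan is to establish the four identities in sequence, each by unwinding the recursive definitions of $\vec(\cdot)$ and $\mat{i}{\cdot}$, with a short induction on the tensor order $k$ where needed. Part (i) I would prove by induction on $k$. The base case $k=1$ is $\vec(u^1)=u^1$, the stated convention. For the step, observe that the mode-$1$ slices of $u^1\out\cdots\out u^k$ are the order-$(k-1)$ rank-one tensors $u^1_{j_1}\,(u^2\out\cdots\out u^k)$, $j_1=1,\ldots,n_1$. By the recursive definition of $\vec$ and the inductive hypothesis, $\vec(u^1\out\cdots\out u^k)$ is obtained by stacking the vectors $u^1_{j_1}\,(u^2\kron\cdots\kron u^k)$ over $j_1$, which by the block structure of the Kronecker product is precisely $u^1\kron(u^2\kron\cdots\kron u^k)$; alternatively one can iterate \cref{lem:kron-props}\ref{it:vec-kron}.

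Part (ii) then follows from (i): the $j_i$-th mode-$i$ slice of $u^1\out\cdots\out u^k$ is $u^i_{j_i}\,(u^1\out\cdots\out u^{i-1}\out u^{i+1}\out\cdots\out u^k)$, whose vectorization equals $u^i_{j_i}\bigkron_{i'\ne i}u^{i'}$ by (i); since $\mat{i}{\cdot}$ stacks the vectorized slices as its rows, we obtain $\mat{i}{u^1\out\cdots\out u^k}=u^i\big(\bigkron_{i'\ne i}u^{i'}\big)\tr$. For part (iii), the identity $\vec(A)=\vec(\mat{1}{A})$ is immediate, since the rows of $\mat{1}{A}$ are $\vec(A^1_{j_1})\tr$ and the matrix operator $\vec(\cdot)$ stacks exactly these, matching the recursion defining $\vec$ on $A$. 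For $\vec(A)=\vec(\mat{k}{A}\tr)$ I would argue at the level of scalar indices: the recursion places the entries of $\vec(A)$ in lexicographic order of $(j_1,\ldots,j_k)$ with $j_1$ varying slowest; one checks that $\mat{k}{A}$ has $(j_k,\alpha)$-entry $A_{j_1,\ldots,j_k}$ with $\alpha$ the lexicographic index of $(j_1,\ldots,j_{k-1})$, so that $\vec(\mat{k}{A}\tr)$ stacks the rows indexed by $\alpha$ (varying slowest) with $j_k$ running innermost — again lexicographic order of $(j_1,\ldots,j_k)$ — whence the two vectors agree entrywise.

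Finally, part (iv) is a one-line element-wise verification: by the explicit formula for $A\times_i M$, the $j_i$-th mode-$i$ slice of $A\times_i M$ has entries $\sum_{s=1}^{n_i}M_{j_i s}A_{j_1,\ldots,s,\ldots,j_k}$, so the $j_i$-th row of $\mat{i}{A\times_i M}$ is $\sum_s M_{j_i s}$ times the $s$-th row of $\mat{i}{A}$, i.e.\ $\mat{i}{A\times_i M}=M\,\mat{i}{A}$. I expect the only genuine bookkeeping hurdle to be the transpose identity in (iii), where one must confirm that the nonstandard row-major $\vec$ composed with the row-stacking $\mat{k}{\cdot}$ behaves so that transposing $\mat{k}{\cdot}$ exactly interchanges the innermost coordinate $j_k$ with the block of the remaining coordinates; everything else is routine unwinding of the definitions. (An alternative for (iii) would be to reduce to rank-one tensors via a decomposition $A=\sum_\ell \lambda_\ell\, u^1_\ell\out\cdots\out u^k_\ell$ and invoke (i) and (ii), but the direct index argument is cleaner.)
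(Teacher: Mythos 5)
Your proof is correct. Parts (i) and (ii) follow the paper's argument essentially verbatim: induction on $k$ via the mode-$1$ slices for (i), and stacking the vectorized mode-$i$ slices for (ii). Where you genuinely diverge is in (iii) and (iv): the paper proves both by first treating a rank-one tensor $A=u^1\out\cdots\out u^k$ — using (i) and (ii) to compute $\vec(\mat{1}{A})$, $\vec(\mat{k}{A}\tr)$ and $\mat{i}{A\times_i M}$ explicitly as Kronecker expressions — and then extends to arbitrary $A$ by writing it as a sum of rank-one tensors and invoking linearity of $\vec(\cdot)$, $\mat{i}{\cdot}$ and $\times_i$. You instead argue directly at the level of entries: for (iii) by tracking that the recursive row-major $\vec$ orders entries lexicographically in $(j_1,\ldots,j_k)$ and that transposing $\mat{k}{A}$ reproduces exactly this order, and for (iv) by a one-line computation from the explicit element-wise definition of $A\times_i M$. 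The rank-one route buys the paper freedom from index bookkeeping and lets (iii)--(iv) ride on (i)--(ii); your route is more elementary, needs neither the rank-one decomposition nor the linearity step, and for (iv) it has the added virtue of directly verifying that the paper's element-wise definition of $\times_i$ is consistent with the defining identity $\mat{i}{A\times_i M}=M\,\mat{i}{A}$, which the rank-one argument only checks implicitly. Both arguments are complete; the only step in yours requiring care — the lexicographic bookkeeping for $\vec(\mat{k}{A}\tr)$ — is carried out correctly.
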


\begin{proof}
\begin{enumerate}[nosep,label=(\roman*),leftmargin=0pt,itemindent=4ex]
\item
The statement is trivially true for $k=1$.
The $j$'th slice of $u^1 \out \cdots \out u^k$ along the first dimension is $u^1_j(u^2 \out \cdots \out u^k)$.
By induction, we have
\begin{align*}
&\vec(u^1 \out \cdots \out u^k)\\
&= (u^1_1(u^2 \kron \cdots \kron u^k)\tr, \cdots, u^1_{n_1}(u^2 \kron \cdots \kron u^k)\tr)\tr\\
&= u^1 \kron \cdots \kron u^k
.
\end{align*}
\item
The $j$'th slice of $u^1 \out \cdots \out u^k$ along the $i$-th dimension is $u^i_j(u^1 \out \cdots u^{i-1} \out u^{i+1}\out \cdots \out u^k)$. Thus
\begin{align*}
\mat{i}{u^1 \out \cdots \out u^k}
= \LR{u^i_1\bigkron_{j\ne i} u^j, \ldots, u^i_{n_i}\bigkron_{j\ne i} u^j}\tr
= u^i \LR{\bigkron_{j\ne i} u^j}\tr
.
\end{align*}
\item
If $A$ is a rank one tensor $u^1 \out \cdots \out u^k$, then we have
\begin{align*}
\vec(\mat{1}{A}) &= \vec(u^1 (u^2 \kron \cdots \kron u^k)\tr)
\\
&=
u^1 \kron \cdots \kron u^k
=
\vec(A) ~
,
\end{align*}
and
\begin{align*}
\vec(\mat{k}{A}\tr)
&=
\vec((u^k (u^1 \kron \cdots \kron u^{k-1})\tr)\tr)
\\
&=
\vec((u^1 \kron \cdots \kron u^{k-1})(u^k)\tr)
\\
&=
u^1 \kron \cdots \kron u^k
=
\vec(A) ~
.
\end{align*}
As any tensor can be written as a sum of rank-one tensors, the identity extends to arbitrary tensors due to the linearity of $\mat{i}{\cdot}$ and $\vec(\cdot)$.
\item
If $A = u^1 \out \cdots \out u^k$ is a rank one tensor, then from the definition it follows that
\begin{align*}
A \times_i M = u^1 \out \cdots\out u^{i-1} \out Mu^i \out u^{i+1} \out \cdots \out u^k ~ .
\end{align*}
Therefore,  from (ii) above, we have
\begin{align*}
\mat{i}{A \times_i M} 
= Mu^i  \LR{\bigkron_{j\ne i} u^j}\tr
= M \mat{i}{A} ~
.
\end{align*}
As above, this property can be extended to an arbitrary tensor $A$ due to the linearity of all operators involved.\qedhere
\end{enumerate}
\end{proof}

\subsection{Proof of \cref{thm:moo-tensor}}

We need the following technical result.

\begin{lemma}\label{lem:transpose-tensor}
Let $G$ be an order $k$ tensor of dimension $n_1 \times \cdots \times n_k$, and $B$ an
$n_i\times n_i$ matrix.  Let $g_i = \vec(\mat{i}{G})$ and $g = \vec(G)$. Then
\begin{align*}
g_i^{} g_i\tr
\preceq
B \kron \LR{ \bigkron_{j\neq i} I_{n_j} }
~~~\Leftrightarrow~~~
gg\tr
\preceq
\LR{ \bigkron_{j<i} I_{n_j} } \kron B \kron \LR{ \bigkron_{j>i} I_{n_j} }
.
\end{align*}
\end{lemma}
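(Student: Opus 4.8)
The plan is to exhibit an explicit permutation matrix $P_i$ relating the two vectorizations $g = \vec(G)$ and $g_i = \vec(\mat{i}{G})$, to show that conjugation by $P_i$ also relates the two Kronecker-product matrices in the statement, and then to conclude using the fact that conjugation by a permutation matrix preserves the Loewner order.

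First I would identify $P_i$. By linearity of $\vec(\cdot)$ and $\mat{i}{\cdot}$ it suffices to treat a rank-one tensor $G = u^1 \out \cdots \out u^k$. By \cref{lem:vec-mat-tensor}\,(i), $g = u^1 \kron \cdots \kron u^k$, while by \cref{lem:vec-mat-tensor}\,(ii) together with \cref{lem:kron-props}\ref{it:vec-kron}, $g_i = \vec\lr{u^i \lr{\bigkron_{j\neq i} u^j}\tr} = u^i \kron u^1 \kron \cdots \kron u^{i-1} \kron u^{i+1} \kron \cdots \kron u^k$. Hence there is a single $n\times n$ permutation matrix $P_i$ --- the ``perfect shuffle'' that moves the $i$-th tensor factor to the front --- such that $P_i\lr{w_1 \kron \cdots \kron w_k} = w_i \kron w_1 \kron \cdots \kron w_{i-1} \kron w_{i+1} \kron \cdots \kron w_k$ for all vectors $w_j \in \reals^{n_j}$. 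Applying this with $w_j = u^j$ and using that rank-one tensors span the space of order-$k$ tensors gives $g_i = P_i\, g$ for every $G$.

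Next I would record the conjugation identity: for arbitrary matrices $C_1,\ldots,C_k$ of compatible sizes,
\[
P_i \lr{C_1 \kron \cdots \kron C_k} P_i\tr
= C_i \kron C_1 \kron \cdots \kron C_{i-1} \kron C_{i+1} \kron \cdots \kron C_k ~,
\]
which is verified by evaluating both sides on product vectors $a_0 \kron \cdots \kron a_{k-1}$ (a spanning set) using \cref{lem:kron-props}\ref{it:kron-prod} and the defining action of $P_i$; both sides send such a vector to the same product vector. Specializing to $C_i = B$ and $C_j = I_{n_j}$ for $j\neq i$ yields
\[
P_i \lr{ \lr{\bigkron_{j<i} I_{n_j}} \kron B \kron \lr{\bigkron_{j>i} I_{n_j}} } P_i\tr
= B \kron \lr{\bigkron_{j\neq i} I_{n_j}} ~.
\]

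Finally, since $P_i$ is a permutation matrix we have $P_i\tr P_i = I_n$, so $X \mapsto P_i X P_i\tr$ is an order-isomorphism of symmetric $n\times n$ matrices: for any $v$, $v\tr\lr{P_i X P_i\tr}v = \lr{P_i\tr v}\tr X \lr{P_i\tr v}$ and $v \mapsto P_i\tr v$ is a bijection of $\reals^n$. Conjugating the inequality $gg\tr \preceq \lr{\bigkron_{j<i} I_{n_j}} \kron B \kron \lr{\bigkron_{j>i} I_{n_j}}$ by $P_i$, and using $P_i\, gg\tr\, P_i\tr = \lr{P_i g}\lr{P_i g}\tr = g_i g_i\tr$ together with the displayed identity, turns it into $g_i g_i\tr \preceq B \kron \lr{\bigkron_{j\neq i} I_{n_j}}$, and the reverse implication is the same computation run backwards. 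The only real work is pinning down the shuffle permutation $P_i$ and checking the conjugation identity; this is conceptually routine, but one must be careful about the order of the factors flanking position $i$, which I expect to be the main (minor) obstacle.
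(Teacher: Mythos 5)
Your proof is correct, but it takes a genuinely different route from the paper's. The paper argues directly with quadratic forms: for an arbitrary test tensor $X$ with $x=\vec(X)$ and $x_i=\vec(\mat{i}{X})$, it observes that $g\tr x=g_i\tr x_i$ (both equal the inner product of $G$ and $X$), and then shows by an explicit index computation on the basis tensors $e_{\alpha_1}\out\cdots\out e_{\alpha_k}$ that $x\tr\bigl[\bigl(\bigkron_{j<i}I_{n_j}\bigr)\kron B\kron\bigl(\bigkron_{j>i}I_{n_j}\bigr)\bigr]x = x_i\tr\bigl[B\kron\bigl(\bigkron_{j\neq i}I_{n_j}\bigr)\bigr]x_i$, so the two Loewner inequalities amount to the same family of scalar inequalities. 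You instead package this combinatorics into a single ``perfect shuffle'' permutation $P_i$ with $g_i=P_i\,g$ (which indeed follows from \cref{lem:vec-mat-tensor} and \cref{lem:kron-props} as you indicate) together with the congruence identity $P_i\bigl(C_1\kron\cdots\kron C_k\bigr)P_i\tr = C_i\kron\bigl(\bigkron_{j\neq i}C_j\bigr)$, and then use that congruence by an invertible (here orthogonal) matrix preserves $\preceq$ in both directions. Both arguments are sound; yours is somewhat more conceptual and reusable, since the shuffle identity is a standard fact about commutation-type permutations and delivers both directions of the equivalence at once, while the paper's avoids introducing the permutation matrix and its conjugation calculus at the cost of a longer index manipulation. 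The only step you should make fully explicit is that $P_i$ is well defined: define it on the basis vectors $e_{\alpha_1}\kron\cdots\kron e_{\alpha_k}$, which it merely permutes (hence it is a permutation matrix), and note that the product-vector property you use then holds for arbitrary $w_1\kron\cdots\kron w_k$ by multilinearity of the Kronecker product; this is routine and does not affect correctness.
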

\begin{proof}
Let $X$ be any $n_1 \times \cdots \times n_k$ dimensional tensor, and denote
$$x = \vec(X) ~ , ~ x_i = \vec(\mat{i}{X}) ~. $$
We will show that
\begin{align*}
x_i\tr  g_i^{}g_i\tr x_i^{}
\le
x_i\tr \LRbra{ B \kron \LR{ \bigkron_{j\neq i} I_{n_j} } } x_i
~~~\Leftrightarrow~~~
x\tr g g\tr x
\le
x\tr \LRbra{\LR{ \bigkron_{j<i} I_{n_j} } \kron B \kron \LR{ \bigkron_{j>i} I_{n_j} }} x
~,
\end{align*}
which would prove the lemma.
We first note that the left-hand sides of the inequalities are equal, as both are equal to the square of the dot-product of the tensors $G$ and $X$ (which can be defined as the dot product of $\vec(G)$ and $\vec(X)$).
We will next show that the right-hand sides are equal as well.

Let us write $X = \sum_\alpha X_\alpha (e_{\alpha_1} \out \cdots \out e_{\alpha_k})$, where
 $\alpha$ ranges over all
$k$-tuples such that $\alpha_j \in [n_j]$ for $j \in [k]$, and $e_{\alpha_j}$ is an $n_j$-dimensional
unit vector with 1 in the $\alpha_j$ position, and zero elsewhere.
Now, $x = \vec(X) = \sum_\alpha X_\alpha (e_{\alpha_1} \kron \cdots \kron e_{\alpha_k})$.
Thus
\begin{align*}
x\tr \LRbra{\LR{ \bigkron_{j<i} I_{n_j} } \kron B \kron \LR{ \bigkron_{j>i} I_{n_j} }} x
&=
x\tr \sum_\alpha X_\alpha \LRbra{\LR{ \bigkron_{j<i} I_{n_j} } \kron B \kron \LR{ \bigkron_{j>i} I_{n_j} }} (e_{\alpha_1} \kron \cdots \kron e_{\alpha_k})
\\
&=
x\tr \sum_\alpha X_\alpha \LRbra{\LR{ \bigkron_{j<i} e_{\alpha_j} } \kron B e_{\alpha_i} \kron \LR{ \bigkron_{j>i} e_{\alpha_j} }}
\\
&=
\sum_{\alpha, \alpha'} X_\alpha X_{\alpha'}
\LRbra{\LR{ \bigkron_{j<i} e_{\smash{\alpha'_j}}\tr e_{\smash{\alpha_j^{}}}^{} } \kron e_{\smash{\alpha'_i}}\tr B e^{}_{\alpha_i^{}} \kron \LR{ \bigkron_{j>i} e_{\smash{\alpha'_j}}\tr e^{}_{\alpha_j^{}} }}
\\
&=
\sum_{\alpha, \alpha'_i} B_{\alpha'_i\alpha_i} X_{\alpha_1\ldots \alpha_i\ldots \alpha_k}  X_{\alpha_1\ldots \alpha'_i\ldots \alpha_k}
,
\end{align*}
since $e_{\smash{\alpha'_j}}\tr e^{}_{\alpha_j^{}} = 1$ if $\alpha'_j = \alpha_j$, and $e_{\smash{\alpha'_j}}\tr e^{}_{\alpha_j^{}} =  0$ otherwise.

On the other hand, recall that
$$
\mat{i}{X}
=
\sum_\alpha X_\alpha e_{\alpha_i} \LR{\bigkron_{j \ne i} e_{\alpha_j}}\tr
,
$$
thus
$$
x_i
=
\vec(\mat{i}{X})
=
\sum_\alpha X_\alpha \LR{e_{\alpha_i} \kron \bigkron_{j \ne i} e_{\alpha_j}}
$$
and therefore
\begin{align*}
x_i\tr \LRbra{ B \kron \LR{ \bigkron_{j\neq i} I_{n_j} } } x_i
&=
x_i\tr \sum_\alpha X_\alpha \LRbra{ B \kron \LR{ \bigkron_{j\neq i} I_{n_j} } } \LR{e_{\alpha_i} \kron \bigkron_{j \ne i} e_{\alpha_j}}
\\
&=
x_i\tr \sum_\alpha X_\alpha \LR{ Be_{\alpha_i} \kron \bigkron_{j \ne i} e_{\alpha_j} }
\\
&=
\sum_{\alpha'} \sum_\alpha X_\alpha X_{\alpha'}
\LR{ e_{\smash{\alpha'_i}}\tr B e^{}_{\alpha_i} \kron \bigkron_{j \ne i} e_{\smash{\alpha'_j}}\tr e^{}_{\alpha_j} }
\\
&=
\sum_{\alpha, \alpha'_i} B_{\alpha'_i\alpha_i} X_{\alpha_1\ldots \alpha_i\ldots \alpha_k}  X_{\alpha_1\ldots \alpha'_i\ldots \alpha_k}
.
\end{align*}
To conclude, we have shown that
\begin{align*}
x_i\tr \LRbra{ B \kron \LR{ \bigkron_{j\neq i} I_{n_j} } } x_i
=
x\tr \LRbra{\LR{ \bigkron_{j<i} I_{n_j} } \kron B \kron \LR{ \bigkron_{j>i} I_{n_j} }} x
,
\end{align*}
and as argued above, this proves the lemma.
\end{proof}

\begin{proof}[Proof of \cref{thm:moo-tensor}]
Consider the matrix $\mat{i}{G_t}$.  By \cref{lem:kron-base}, we have
\begin{align*}
\frac{1}{r_i} \vec(\mat{i}{G_t})\vec(\mat{i}{G_t})\tr
\preceq
(\mat{i}{G_t}\mat{i}{G_t}\tr) \kron I_{n_{-i}}
=
G_t^{(i)} \kron \LR{ \bigkron_{j \ne i} I_{n_j} }
\end{align*}
(recall that $n_{-i} = \prod_{j\neq i} n_j$). 
Now, by \cref{lem:transpose-tensor}, this implies
\begin{align*}
\frac{1}{r_i} g_tg_t\tr
\preceq
\LR{\bigkron_{j=1}^{i-1} I_{n_j}} \kron G_t^{(i)} \kron \LR{\bigkron_{j=i+1}^{k} I_{n_j}}
.
\end{align*}
Summing over $t=1,\ldots,T$ and adding $\eps I_n$, we have for each dimension $i \in [k]$ that:
\begin{align*}
\eps I_{n} + \sum_{t=1}^T g_t g_t\tr
&\preceq
r_i \eps I_{n} + r_i \, \LR{\bigkron_{j=1}^{i-1} I_{n_j}} \kron \LR{\sum_{t=1}^T G_t^{(i)}} \kron \LR{\bigkron_{j=i+1}^{k} I_{n_j}}
\\
&=
r_i \, \LR{\bigkron_{j=1}^{i-1} I_{n_j}} \kron \LR{ \eps I_{n_i} + \sum_{t=1}^T G_t^{(i)} } \kron \LR{\bigkron_{j=i+1}^{k} I_{n_j}}
.
\end{align*}
The matrices on the right-hand sides of the $k$ inequalities are positive semidefinite and commute with each other, so we are in
a position to apply \cref{lem:geomean} and obtain the result.
\end{proof}

\subsection{Proof of \cref{lem:vec-kron-tensor}}

\begin{proof}
The proof is by induction on $k \ge 2$.
The base case ($k = 2$) was already proved in \cref{lem:vec-kron}.
For the induction step, let $H = \bigkron_{i=1}^{k-1} M_i$.
Using the relation $\vec(G) = \vec(\mat{k}{G}\tr)$ and then \cref{lem:vec-kron}, the left-hand side of the identity is
\begin{align*}
\LR{\bigkron_{i=1}^k M_i}\vec(G)
=
(H \kron M_k)\vec(\mat{k}{G}\tr)
=
\vec(H\mat{k}{G}\tr M_k\tr)
.
\end{align*}
Now, consider the slices $G^k_1 ,\ldots, G^k_{n_k}$ of $G$ along the $k$'th dimension (these are $n_k$ tensors of order $k-1$). 
Then the $i$'th row of $\mat{k}{G}$ is the vector $\vec(G^k_i)\tr$.  Applying
the induction hypothesis to $H \vec(G^k_i)$, we get
\begin{align*}
H\vec(G^k_i)
=
\LR{\bigkron_{i=1}^{k-1} M_i} \vec(G^k_i)
=
\vec(G^k_i  \times_1 M_1 \times_2 M_2 \cdots \times_{k-1} M_{k-1})
.
\end{align*}
Stacking the $n_k$ vectors on both sides (for $i = 1,\ldots,n_k$) to form
$n_k \times n_{-k}$ matrices, we get
$$
H \mat{k}{G}\tr
=
\mat{k}{G \times_1 M_1 \times_2 M_2 \cdots \times_{k-1} M_{k-1}}\tr
.
$$
Now, let $G' = G \times_1 M_1 \times_2 M_2 \cdots \times_{k-1}
M_{k-1}$.
Substituting, it follows that
\begin{align*}
\smash{\LR{\bigkron_{i=1}^k M_i}} \vec(G)
&=
\vec(\mat{k}{G'}\tr M_k\tr)
\\
&= \vec((M_k\mat{k}{G'})\tr)
\\
&=
\vec(\mat{k}{G' \times_k M_k}\tr)
&&\qquad\because ~ B\mat{i}{A} = \mat{i}{A \times_i B}
\\
&=
\vec(G' \times_k M_k)
&&\qquad\because ~ \vec(\mat{k}{G}\tr) = \vec(G)
\\
&=
\vec(G \times_1 M_1 \cdots \times_k M_k)
.&&\qedhere
\end{align*}
\end{proof}

\section{Additional proofs}
\label{sec:moreproofs}

\subsection{Proof of \cref{lem:adareg}}

\begin{proof}
The proof is an instance of the Follow-the-Leader / Be-the-Leader
(FTL-BTL) Lemma of~\citet{kalai2005efficient}.
We rewrite the inequality we wish to prove as
\begin{align*}
\sum_{t=1}^T  \Lr{\norm{g_t}_{H_t}^*}^2 + \Phi(H_0)
	\leq \sum_{t=1}^T  \Lr{\norm{g_t}_{H_T}^*}^2 + \Phi(H_T) ~ .
\end{align*}
The proof proceeds by an induction on $T$.
The base of the induction, $T=0$, is trivially true. Inductively, we have
\begin{align*}
\sum_{t=1}^{T-1}  \Lr{\norm{g_t}_{H_t}^*}^2 +
	\Phi(H_0) &\leq \sum_{t=1}^{T-1}  \Lr{\norm{g_t}_{H_{T-1}}^*}^2 +
	\Phi(H_{T-1})\\
	&\leq \sum_{t=1}^{T-1}  \Lr{\norm{g_t}_{H_{T}}^*}^2 + \Phi(H_{T}) ~.
\end{align*}
The second inequality follows from the fact that $H_{T-1}$
is a minimizer of
$$M_{T-1} \bullet H^{-1} + \Phi(H) =
	\sum_{t=1}^{T-1}  \Lr{\norm{g_t}_H^*}^2 + \Phi(H) ~ .$$
Adding $(\norm{g_t}_{H_{T}}^*)^2$ to both sides gives the result.
\end{proof}

\subsection{Proof of \cref{lem:vec-kron}}

\begin{proof}
We first prove the claim for $G$ of rank one, $G = u v\tr$.
Using first \ref{it:vec-kron} and then \ref{it:kron-prod} from \cref{lem:kron-props}, the left
hand side is,
\begin{align*}
(L \kron R\tr) \vec(G)
	&= (L \kron R\tr) \vec(u v\tr)
	 = (L \kron R\tr) (u\kron v)
	 = (Lu) \kron (R\tr v)
	~ .
\end{align*}
For the right hand side we have,
\begin{align*}
 \vec(L G R)
	&= \vec(L u v\tr R) = \vec\Lr{L u (R\tr v)\tr} = (Lu) \kron (R\tr v) ~ ,
\end{align*}
where we used \ref{it:vec-kron} from \cref{lem:kron-props} for the last equality. Thus we
proved the identity for $G = u v\tr$. More generally, any matrix can be expressed as a sum of rank one matrices, thus the identity follows from the linearity of all the operators involved.
\end{proof}

\end{document}